\documentclass{article}

\usepackage{subfigure}
\usepackage{subfigure,graphicx,dsfont,amssymb,amsmath,calc,tabulary}

\usepackage{amsmath}
\usepackage{amssymb}
\usepackage{amsthm}
\usepackage{fixmath}
\usepackage{algorithm}
\usepackage{algorithmic}
\usepackage[numbers]{natbib}
\usepackage{xcolor}
\newtheorem{theorem}{Theorem}

\newtheorem{lemma}[theorem]{Lemma}

\usepackage{amsmath,amsfonts,bm}












\def\Algref#1{Algorithm~\ref{#1}}



\def\1{\bm{1}}








\def\vtheta{{\bm{\theta}}}

\def\vq{{\bm{q}}}

\def\vw{{\bm{w}}}
\def\vx{{\bm{x}}}

\def\vz{{\bm{z}}}



\DeclareMathAlphabet{\mathsfit}{\encodingdefault}{\sfdefault}{m}{sl}
\SetMathAlphabet{\mathsfit}{bold}{\encodingdefault}{\sfdefault}{bx}{n}


\def\gB{{\mathcal{B}}}
\def\gC{{\mathcal{C}}}

\def\gL{{\mathcal{L}}}

\def\gO{{\mathcal{O}}}

\def\gR{{\mathcal{R}}}
\def\gS{{\mathcal{S}}}



\def\sH{{\mathbb{H}}}

\def\sK{{\mathbb{K}}}

\def\sR{{\mathbb{R}}}








\newcommand{\E}{\mathbb{E}}



\DeclareMathOperator*{\argmin}{arg\,min}

\usepackage[final]{neurips_2022}




\usepackage[utf8]{inputenc} 
\usepackage[T1]{fontenc}    
\usepackage{hyperref}       
\usepackage{url}            
\usepackage{booktabs}       
\usepackage{amsfonts}       
\usepackage{nicefrac}       
\usepackage{microtype}      
\usepackage{xcolor}         

\title{Personalized Online Federated Learning \\with Multiple Kernels}

%

\author{%
  Pouya M.~Ghari \\
  University of California Irvine\\
  \texttt{pmollaeb@uci.edu}
  \And
  Yanning Shen \thanks{Corresponding author} \\
  University of California Irvine\\
  \texttt{yannings@uci.edu}
}

\begin{document}

\maketitle

\begin{abstract}
Multi-kernel learning (MKL) exhibits well-documented performance in online non-linear function approximation. Federated learning enables a group of learners (called clients) to train an MKL model on the data distributed among clients to perform online non-linear function approximation. There are some challenges in online federated MKL that need to be addressed: i) Communication efficiency especially when a large number of kernels are considered ii) Heterogeneous data distribution among clients. The present paper develops an algorithmic framework to enable clients to communicate with the server to send their updates with affordable communication cost while clients employ a large dictionary of kernels. Utilizing random feature (RF) approximation, the present paper proposes scalable online federated MKL algorithm. We prove that using the proposed online federated MKL algorithm, each client enjoys sub-linear regret with respect to the RF approximation of its best kernel in hindsight, which indicates that the proposed algorithm can effectively deal with heterogeneity of the data distributed among clients. Experimental results on real datasets showcase the advantages of the proposed algorithm compared with other online federated kernel learning ones.
\end{abstract}

\section{Introduction}
Kernel learning exhibits well-documented performance in function approximation tasks, while providing theoretical guarantees associated with different performance metrics, see e.g. \cite{Wahba1990,Hoi2013,Rudi2017}. In some cases, a group of learners aims at collaborating to perform function approximation without revealing their data. To this end, federated learning has been emerged as a crucial learning paradigm by enabling a group of learners called clients to collaborate with each other by communicating with a central server to train a centralized model \cite{McMahan2017,Li2020,Fallah2020,Kairouz2021}. Through this process, clients send model parameters and updates to the server without revealing their data. Upon receiving updates from clients, the server updates the model. Therefore, federated learning enables clients to perform kernel learning for function approximation. In this context, a server and clients collaborate with each other to learn the optimal kernel. Furthermore, in some practical cases, clients may need to perform the function approximation in an online fashion while they are collaborating with the server to learn the kernel. For example, consider the case where clients may not have enough memory to store data in batch. In addition, data samples may arrive in a  sequential manner such that clients are not able to perform the function approximation in batch form. 
There are major challenges in performing online kernel learning in federated fashion that need to be addressed:\\
\textbf{Communication Efficiency:} Communication efficiency arises as a bottleneck in federated learning (see e.g. \cite{konency2017,Rothchild2020,Hamer2020,Ghari2022}). Specifically, limited clients-to-server communication bandwidth restricts the number of parameters that can be sent from clients to the server.\\
\textbf{Heterogeneous Data:} The distribution of data observed by a client might be different from others (see e.g. \cite{Smith2017,Li2021,Collins2021}). Thus, the optimal kernel that is aimed to be learned is different across clients.\\
\textbf{Computational Complexity:} Clients should be able to perform function approximation fast enough in order to make a decision in real-time. Therefore, the computational complexity of kernel learning methods should be affordable for clients.

Conventional online kernel learning approaches (see e.g. \cite{Hoi2013,Sahoo2014}) suffer from `curse of dimensionality' \cite{Bengio2006} in the sense that the number of parameters that should be learned increases with the number of observed data. This can make employing conventional online kernel learning approaches infeasible to perform online federated kernel learning since clients may be required to send a large number of parameter updates to the server while the available clients-to-server communication bandwidth is not enough for sending such information. Approximating kernels by finite-dimensional feature representations (e.g. Nystr\"{o}m method \cite{Williams2000} and random feature method of \cite{Rahimi2007}) makes online kernel learning approaches scalable in the sense that the learner can choose the number of parameters that should be learned, independent of the number of observed data samples (see e.g. \cite{Lu2016,Bouboulis2018,Zhang2019}). Employing finite-dimensional feature representations of kernels to perform online federated kernel learning, clients can choose the number of parameters that they should send to the server. Therefore, finite-dimensional kernel approximation can better cope with limited clients-to-server communication bandwidth compared to conventional kernel learning approaches. Random feature (RF) approximation \cite{Rahimi2007} has been exploited to perform online federated kernel learning when a single pre-selected kernel function is employed \cite{Kuh2021,Gogineni2022}. The choice of the kernel function greatly affects the performance of function approximation when it comes to exploiting only a single kernel function. Employing multiple kernels instead of a single pre-selected one, can lead to obtaining more accurate function approximation since multi-kernel learning (MKL) can learn combination of kernels \cite{Kloft2011}. Online federated MKL algorithms with theoretical guarantees called vM-KOFL and eM-KOFL have been proposed in \cite{Hong2021}. However, eM-KOFL and vM-KOFL do not provide personalized MKL models for clients since they learn the same combination of kernels for all clients.

The present paper proposes a novel \textbf{p}ersonalized \textbf{o}nline \textbf{f}ederated \textbf{MKL} algorithm called POF-MKL that provides a personalized MKL model for each client while it is ensured that the available clients-to-server communication bandwidth can afford communication cost of sending clients' updates to the server. In order to alleviate the communication cost of MKL, the propsoed POF-MKL employs RF approximation of kernels and at each time instant, each client chooses a subset of kernels to send their updates to the server instead of sending the updates of all kernels. The number of kernels in the chosen subset is selected such that the required bandwidth to send all clients' updates does not exceed the available clients-to-server communication bandwidth. Therefore, clients can send their updates to the server independent of the number of kernels in the dictionary and as a result a comparatively large dictionary of kernels can be considered to perform function approximation. Contributions of the present paper can be summarized as follows:\\
\textbf{c1.} Leveraging the proposed POF-MKL, clients can update a subset of kernels' parameters which alleviates computational complexity and communication cost of sending updates to the server;\\ 
\textbf{c2.} 
Through theoretical analysis, it is proved that using the proposed POF-MKL, each client achieves sub-linear regret with respect to RF approximation of the best kernel in hindsight associated with the corresponding client data samples (c.f. Theorem \ref{th:2}). Moreover, it is guaranteed that the server achieves sub-linear regret with respect to the best function approximator (c.f. Theorem \ref{th:3});\\ 
\textbf{c3.} Experiments on real datasets showcase the effectiveness of the proposed POF-MKL compared to existing online federated kernel learning algorithms.

\section{Problem Statement and Preliminaries} \label{problem_statement}
Let there be a set of $K$ clients performing function approximation task in an online fashion. The $k$-th client's goal is to learn the function $f$ using the stream of data samples $\{(\vx_{k,t}, y_{k,t})\}_{t=1}^T$ such that $\vx_{k,t} \in \sR^d$ is the data sample observed by the $k$-th client at time $t$ and $y_{k,t}$ is the label associated with $\vx_{k,t}$. In the kernel learning context, the function $f$ is assumed to belong to a reproducing kernel Hilbert space (RKHS). The present paper studies the personalized federated supervised function approximation problem 
\begin{align}
    \min_{f \in \sH} \sum_{t=1}^{T}{\sum_{k=1}^{K}{\gL(f(\vx_{k,t}),y_{k,t})}} \label{eq:1}
\end{align}
where $\sH$ represents the RKHS the function $f$ belongs to and $\gL(\cdot,\cdot)$ denotes the loss function which can be defined as
\begin{align}
    \gL(f(\vx),y) = \gC(f(\vx),y) + \lambda \Omega(\|f\|^2) \label{eq:2}
\end{align}
where $\gC(\cdot,\cdot)$ is the cost function (e.g. least squares function for regression task), $\lambda$ denotes the regularization coefficient and $\Omega(\cdot)$ represents a regularizer function to prevent over-fitting and control the model complexity.
Let $\boldsymbol{\Theta}$ be the global parameters of the function $f$ which are learned through collaboration of clients with the server while $\vw_k$ be the personalized parameter of the function $f$ learned locally by the $k$-th client. Thus, the goal is that the cumulative difference between $f(\vx_{k,t};\boldsymbol{\Theta},\vw_k)$ and $y_{k,t}$ over time is minimized. At each time instant $t$, upon observing the data sample $\vx_{k,t}$, the $k$-th client make the prediction $f(\vx_{k,t};\boldsymbol{\Theta},\vw_k)$ and then observes the true label $y_{k,t}$. Therefore, the function approximation problem that the server aims at solving can be expressed as $\min_{\Theta} \sum_{t=1}^{T}{\sum_{k=1}^{K}{\gL(f(\vx_{k,t};\boldsymbol{\Theta},\vw_k),y_{k,t})}}$.
Moreover, finding the local parameters $\vw_k$ by the $k$-th client can be expressed as the optimization problem
$
    \min_{\vw_k} \sum_{t=1}^{T}{\gL(f(\vx_{k,t};\boldsymbol{\Theta},\vw_k),y_{k,t})}. 
$
In order to perform the function approximation task in an online fashion, the $k$-th client needs to perform the task with the values of $\boldsymbol{\Theta}$ and $\vw_k$ at time $t$ denoted by $\boldsymbol{\Theta}_t$ and $\vw_{k,t}$, respectively. Thus, the values of function parameters $\boldsymbol{\Theta}$ and $\vw_k$ should be updated `on the fly'. 
Since the function $f(\cdot;\cdot,\cdot)$ belongs to a reproducing kernel Hilbert space (RKHS), based on the representer theorem \citep{Wahba1990}, given data samples, the optimal solution for \eqref{eq:1} can be obtained as
\begin{align}
    \hat{f}(\vx) = \sum_{t=1}^{T}{\sum_{k=1}^{K}{\alpha_{k,t} \kappa(\vx,\vx_{k,t})}} \label{eq:3}
\end{align}
where $\kappa(\cdot,\cdot)$ denotes symmetric positive definite kernel function such that $\kappa(\vx,\vx^\prime)$ measures the similarity between $\vx$ and $\vx^\prime$. And $\alpha_{k,t}$ is an unknown coefficient associated with $\kappa(\vx,\vx_{k,t})$ which is required to be estimated. In this case, $\hat{f}(\cdot)$ in \eqref{eq:3} belongs to the RKHS $\sH:= \{f(\cdot)|f(\vx)=\sum_{t=1}^{\infty}{\sum_{k=1}^{K}{{\alpha}_{k,t}\kappa(\vx,\vx_{k,t})}}\}$ such that RKHS norm is defined as $\| f\|_{\sH}^{2} := \sum_{t}{\sum_{t^\prime}{\alpha_{t}\alpha_{t^\prime}{\kappa(\vx_{t},\vx_{{t}^{\prime}})}}}$. Furthermore, from \eqref{eq:3}, it can be inferred that $\boldsymbol{\Theta}_t = [\alpha_{1,1},\ldots,\alpha_{K,1},\ldots,\alpha_{1,t},\ldots,\alpha_{K,t}]$. Therefore, the number of coefficients $\{\alpha_{k,\tau}\}_{\tau=1}^{t}$, $\forall k$ that should be estimated to obtain $\hat{f}(\cdot)$ increases over time. This is known as \emph{curse of dimensionality} \citep{Wahba1990} since the computational complexity of function approximation increases with time. This brings challenge for federated implementation of function approximation since dimension of updates that should be sent to the server by each client grows over time and when $T$ is large, the available communication bandwidth may not be enough for clients to send their updates. 

In order to deal with the increasing number of unknown coefficients, one can employ random Fourier approximation \citep{Rahimi2007}. Assume that $\kappa(\cdot)$ is a shift-invariant kernel meaning that $\kappa(\vx,\vx^\prime)=\kappa(\vx-\vx^\prime)$. Let $\pi_{\kappa}(\boldsymbol{\rho})$ denotes the Fourier transform of $\kappa(\cdot)$. If the kernel function $\kappa(\cdot)$ is normalized such that $\kappa(\boldsymbol{0})=1$, then $\pi_{\kappa}(\boldsymbol{\rho})$ can be viewed as a probability density function (PDF) (see e.g. \cite{Rahimi2007}). 
Let $\boldsymbol{\rho}_1,\ldots,\boldsymbol{\rho}_D$ be a set of $D$ independent and identically distributed (i.i.d) vectors drawn from $\pi_{\kappa}(\cdot)$. 
Let the vector $\vz(\vx)$ be defined as
\begin{align}
    \vz(\vx) = \frac{1}{\sqrt{D}}[\sin(\boldsymbol{\rho}_1^\top \vx),\ldots,\sin(\boldsymbol{\rho}_D^\top \vx),\cos(\boldsymbol{\rho}_1^\top \vx),\ldots,\cos(\boldsymbol{\rho}_D^\top \vx)]. \label{eq:6}
\end{align}
Then, $\hat{\kappa}_{r}(\vx-\vx^\prime) = \vz(\vx)^\top \vz(\vx^\prime)$ constitutes an unbiased estimator of $\kappa(\vx-\vx^\prime)$ and the random feature (RF) approximation of $\hat{f}(\vx)$ in \eqref{eq:3} can be obtained as
\begin{align}
    \hat{f}_{\text{RF}}(\vx) = \sum_{t=1}^{T}{\sum_{k=1}^{K}{\alpha_{k,t} \vz(\vx_{k,t})^\top \vz(\vx)}} := \boldsymbol{\theta}^\top \vz(\vx) \label{eq:7}
\end{align}
where in this case $\boldsymbol{\theta} = \sum_{t=1}^{T}{\sum_{k=1}^{K}{\alpha_{k,t} \vz(\vx_{k,t})}}$. According to \eqref{eq:6}, $\vz(\vx_{k,t})$ is a $2D$ vector and as a result it can be concluded that $\boldsymbol{\theta}$ is a $2D$ vector as well. Therefore, using RF approximation, the vector $\boldsymbol{\theta}$ should be estimated whose dimension does not grow over time. 

The performance of a kernel learning algorithm depends on the choice of the kernel. Thus, performing the function approximation using a pre-selected kernel requires prior information which may not be available. To cope with this, employing a dictionary of kernels in lieu of a pre-selected single kernel has been proposed in the literature (see e.g. \cite{Sonnenburg2006,Kloft2011,Lv2021}). Specifically, the kernel is learned as a combination of kernels in the dictionary. Let $\kappa_1(\cdot),\ldots,\kappa_N(\cdot)$ be a set of $N$ kernels where $\kappa_i(\cdot)$ denotes the $i$-th kernel. The function $\Bar{\kappa}(\cdot)$ belongs to the convex hull $\sK := \{\Bar{\kappa}(\vx) = \sum_{i=1}^{N}{\beta_i \kappa_i(\vx)}, \beta_i \ge 0, \forall i, \sum_{i=1}^{N}{\beta_i}=1\}$ is a kernel \cite{Scholkopf2001}. Therefore, in online multi-kernel learning, the goal is to learn the convex combination of kernels in the dictionary to minimize the cumulative regret with respect to the best function approximator in hindsight. The cumulative regret is defined as the cumulative difference between loss of the online multi-kernel learning algorithm and that of the best function approximator in hindsight. Furthermore, for a dataset $\{(\vx_t,y_t)\}_{t=1}^T$, the best function approximator is $f^*(\cdot) \in \argmin_{f_i^*, i\in [N]} \sum_{t=1}^{T}{\gL(f_i^*(\vx_t),y_t)}$ where $f_i^*(\cdot) \in \argmin_{f \in \sH_i} \sum_{t=1}^{T}{\gL(f(\vx_t),y_t)}$ such that 
$\sH_i$ is an RKHS induced by $\kappa_i(\cdot)$ and $[N]:=\{1,\ldots,N\}$. Enabled by random feature approximation, centralized and scalable online multi-kernel learning algorithms have been proposed in literature (see e.g. \cite{Sahoo2019,Shen2019,Ghari2020}). The present paper proposes an algorithmic framework for personalized online federated MKL using RF approximation of kernels in the dictionary. 

\section{Personalized Online Federated Multi-Kernel Learning} \label{POF-MKL}
The present section proposes an algorithmic framework for online federated multi-kernel learning which can deal with heterogeneous data among clients. To perform function approximation, RF approximations of kernel functions are employed. For the $i$-th kernel $\kappa_i$, vectors $\boldsymbol{\rho}_{i,1},\ldots,\boldsymbol{\rho}_{i,D}$ are drawn i.i.d from $\pi_{\kappa_i}(\cdot)$ to construct the random feature vector $\vz_i(\vx) = \frac{1}{\sqrt{D}}[\sin(\boldsymbol{\rho}_{i,1}^\top \vx),\ldots,\sin(\boldsymbol{\rho}_{i,D}^\top \vx),\cos(\boldsymbol{\rho}_{i,1}^\top \vx),\ldots,\cos(\boldsymbol{\rho}_{i,D}^\top \vx)]$. Then, at time instant $t$, the random feature approximation associated with $\kappa_i(\cdot)$ can be obtained as $\hat{f}_{\text{RF},it}(\vx) = \boldsymbol{\theta}_{i,t}^\top \vz_i(\vx)$ where $\boldsymbol{\theta}_{i,t}$ is the global function parameter associated the $i$-th kernel at time $t$.

\subsection{Algorithm}
At each time instant $t$, the server transmits global function parameters $\boldsymbol{\theta}_{i,t}$, $\forall i \in [N]$ to all clients. The $k$-th client, assigns the weight $w_{ik,t}$ to the $i$-th kernel which indicates the confidence of the $k$-th client at time $t$ in the function approximation given by the $i$-th kernel. Upon receiving new data sample $\vx_{k,t}$, the $k$-th client performs the function approximation combining kernels' RF approximations as
\begin{align}
    \hat{f}(\vx_{k,t};\hat{\boldsymbol{\Theta}}_t,\vw_{k,t}) = \sum_{i=1}^{N}{\frac{w_{ik,t}}{W_{k,t}}\boldsymbol{\theta}_{i,t}^\top \vz_i(\vx_{k,t})} = \sum_{i=1}^{N}{\frac{w_{ik,t}}{W_{k,t}}\hat{f}_{\text{RF},it}}(\vx_{k,t};\vtheta_{i,t}) \label{eq:9}
\end{align}
where $\hat{\boldsymbol{\Theta}}_t = [\boldsymbol{\theta}_{1,t},\ldots,\boldsymbol{\theta}_{N,t}]$, $\vw_{k,t}=[w_{1k,t},\ldots,w_{Nk,t}]$ and $W_{k,t} = \sum_{i=1}^{N}{w_{ik,t}}$. As it can be inferred from \eqref{eq:9}, each client constructs its own personalized combination of kernels. Upon observing the true label $y_{k,t}$, the $k$-th client calculates the losses $\gL(\hat{f}_{\text{RF},it}(\vx_{k,t};\vtheta_{i,t}),y_{k,t})$, $\forall i \in [N]$. Then, the $k$-th client leverages calculated losses to locally update both global and local parameters. Let $\boldsymbol{\theta}_{ik,t+1}$ and $w_{ik,t+1}$ denote the $k$-th client's local updates of $\boldsymbol{\theta}_{i,t}$ and $w_{ik,t}$, respectively. Specifically, the $k$-th client utilizes multiplicative update rule to update $w_{ik,t}$ as
\begin{align}
    w_{ik,t+1} = w_{ik,t}\exp\left(-\eta_k \gL(\hat{f}_{\text{RF},it}(\vx_{k,t};\vtheta_{i,t}),y_{k,t})\right), \forall i \in [N] \label{eq:12}
\end{align}
where $\eta_k$ is the learning rate of the $k$-th client. Note that the $k$-th client ($\forall k \in [K]$) does not send its updated local parameter $\vw_{k,t+1}$ to the server. Clients send their locally updated global parameters to the server (i.e. $\boldsymbol{\theta}_{ik,t+1}$). Aggregating local updates, the server updates global parameters to $\hat{\boldsymbol{\Theta}}_{t+1}$. If all clients send updates associated with all kernels (i.e. $\boldsymbol{\theta}_{ik,t+1}$, $\forall i \in [N]$), this requires sending $2ND$ parameters by each client at each time instant. When the number of both clients and kernels are large, the available client-to-server communication bandwidth may not be enough to afford sending $2NDK$ parameters per time instant even for small values of $D$. Note that reducing $N$ and $D$ degrade the performance of online federated MKL. Reducing $N$ (the number of kernels), decreases the flexibility of clients to construct their ideal kernel using convex combination of kernels in the dictionary. Reducing $D$ can degrade the accuracy of RF approximation.

The present paper proposes an algorithmic framework to enable clients to perform online function approximation with sufficiently large dictionary of kernels while the available clients-to-server communication bandwidth can afford sending updates from clients to the server when a desirable value for the number of random features $D$ is chosen. To this end, at each time instant, each client randomly chooses a subset of $M$ kernels among all $N$ kernels in the dictionary. Then, each client updates and sends the global parameters of the chosen $M$ kernels to the server instead of updating and sending global parameters of all kernels. To choose a subset of $M$ kernels, each client splits kernels into some bins and draws randomly one of the bins at each time instant. Each bin contains at most $M$ kernels and each client updates and sends global parameters associated with kernels in the chosen bin. In order to distribute kernels among bins, at first the $k$-th client sorts kernels in descending order according to kernels' weights $\{w_{ik,t}\}_{i=1}^{N}$. Let $\gB_j$ represents the $j$-th bin of kernels. The $k$-th client adds kernels from sorted list one by one to $\gB_j$ until either all kernels are assigned to a bin or the number of kernels in $\gB_j$ reaches $M$. When there are some kernels that are not assigned to any bins while there are $M$ kernels in $\gB_j$, the $k$-th client opens the bin $\gB_{j+1}$ and adds kernels to this bin. This continues until all kernels are assigned to a bin. As it can be inferred from the procedure of distributing kernels into bins, the number of bins at every client is $m = \left \lceil \frac{N}{M} \right \rceil$. Furthermore, it can be concluded that $\gB_1$ includes $M$ kernels with the largest weights while the bin $\gB_m$ includes $N-(m-1)M$ kernels with lowest weights. The $k$-th client assigns the weight $u_{jk,t}$ at time $t$ to $\gB_j$ defined as $u_{jk,t} = \sum_{\kappa_i \in \gB_j}{w_{ik,t}}$. The $k$-th client draws one of the bins according to the probability mass function (PMF) $\vq_{k,t}$ defined as
\begin{align}
    q_{jk,t} = (1-\xi_k)\frac{u_{jk,t}}{U_{k,t}} + \frac{\xi_k}{m}, \forall j \in [m] \label{eq:10}
\end{align}
where $U_{k,t} = \sum_{j=1}^{m}{u_{jk,t}}$ and $0 < \xi_k \le 1$ is an exploration rate determined by the $k$-th client. Let $I_{k,t}$ be the index of the chosen bin by the $k$-th client at time $t$. The PMF in \eqref{eq:10} constitutes trade-off between exploitation and exploration. According to the first term in the right hand side of \eqref{eq:10}, it is more probable that the $k$-th client draws a bin which includes kernels with larger weights $w_{ik,t}$. Hence, it is more probable that the $k$-th client collaborates in updating the global parameters of a kernel with larger weight $w_{ik,t}$. Let $\gS_{k,t}$ denotes the set which includes the indices of kernels in the chosen bin at time $t$. 
\begin{algorithm}[tb]
	\caption{The $k$-th client kernel subset selection at time $t$.}
	\label{alg:1}
	\begin{algorithmic}
		\STATE {\bfseries Input:}{Weights $w_{ik,t}$, $\forall i \in [N]$, parameter $M$ and exploration rate $0<\xi_k\le 1$. }
		\STATE Sort the kernels in descending order with respect to weights $\{w_{ik,t}\}_{i=1}^{N}$.
		\STATE Obtain the index sequence $s_1, \ldots, s_N$ such that $w_{s_bk,t} \le w_{s_ak,t}$ if $b>a$, $\forall a,b \in [N]$.
		\STATE Open bin $\gB_1$ and initialize $j=1$.
		\FORALL{$i \in [N]$, the $k$-th client}
		\IF{the bin $\gB_j$ includes less than $M$ kernels}
		\STATE Adds the $s_i$-th kernel to $\gB_j$.
		\ELSE
		\STATE Opens new bin $\gB_{j+1}$, adds the $s_i$-th kernel to $\gB_{j+1}$ and updates $j \leftarrow j+1$.
		\ENDIF
		\ENDFOR
		\STATE Draw an index $I_{k,t}$ via PMF $\vq_{k,t}$ in \eqref{eq:10}.
		\STATE {\bfseries Output:} $\gS_{k,t}$: indices set of kernels in the selected bin $\gB_{I_{k,t}}$
	\end{algorithmic}
\end{algorithm}
The \Algref{alg:1} summarizes the procedure that the $k$-th client determines the set $\gS_{k,t}$. According to \Algref{alg:1}, kernel subset selection is personalized since each client chooses its own subset of kernels to update their parameters.

Let $p_{ik,t}$ denotes the probability that $i \in \gS_{k,t}$. Then $p_{ik,t} = q_{b_i k,t}$ where $b_i$ is the index of the bin which includes the $i$-th kernel. The $k$-th client updates global parameters locally as follows
\begin{align}
    \boldsymbol{\theta}_{ik,t+1} = \boldsymbol{\theta}_{i,t} - \eta  \frac{\nabla \gL(\boldsymbol{\theta}_{i,t}^\top \vz_i(\vx_{k,t}),y_{k,t})}{p_{ik,t}} \1_{i \in \gS_{k,t}} \label{eq:11}
\end{align}
where $\1_{i \in \gS_{k,t}}$ denotes an indicator function and it is $1$ when $i \in \gS_{k,t}$. The update rule in \eqref{eq:11} implies that when $i \notin \gS_{k,t}$, the $k$-th client does not update $\boldsymbol{\theta}_{i,t}$ (i.e. $\boldsymbol{\theta}_{ik,t+1} = \boldsymbol{\theta}_{i,t}$). Therefore, the $k$-th client sends $\boldsymbol{\theta}_{ik,t+1}$ to the server only if $i \in \gS_{k,t}$. Therefore, at each time instant, each client needs to send at most $2MD$ parameters to the server. Let $\gC_{i,t}$ be a set of client indices such that $k \in \gC_{i,t}$ if the $k$-th client sends $\boldsymbol{\theta}_{ik,t+1}$ to the server. Upon aggregating updates from clients, the server updates $\boldsymbol{\theta}_{i,t}$ as
\begin{align}
    \boldsymbol{\theta}_{i,t+1} = \boldsymbol{\theta}_{i,t} - \frac{1}{K} \sum_{k \in \gC_{i,t}}{(\boldsymbol{\theta}_{i,t}-\boldsymbol{\theta}_{ik,t+1})} = \boldsymbol{\theta}_{i,t} - \frac{\eta}{K} \sum_{k=1}^{K}{\frac{\nabla \gL(\boldsymbol{\theta}_{i,t}^\top \vz_i(\vx_{k,t}),y_{k,t})}{p_{ik,t}} \1_{i \in \gS_{k,t}}}. \label{eq:13}
\end{align}

\begin{algorithm}[tb]
	\caption{Personalized Online Federated Multi-Kernel Learning (POF-MKL)}
	\label{alg:2}
	\begin{algorithmic}
		\STATE {\bfseries Input:}{Kernels $\kappa_{i}$, $i=1,...,N$, learning rate $\eta>0$ and the number of random features $D$. }
		\STATE \textbf{Initialize:} $\vtheta_{i,1}=\mathbf{0}$, ${w}_{ik,1}=1$, $\forall i \in [N]$, $\forall k \in [K]$.
		\FOR{$t=1,\ldots,T$}
		\STATE The server transmits the global parameters $\hat{\boldsymbol{\Theta}}_t=[\vtheta_{1,t},\ldots,\vtheta_{N,t}]$ to all clients.
		\FORALL{$k \in [K]$, the $k$th client}
		\STATE Receive one datum $\vx_{k,t}$.
		\STATE Predicts $\hat{f}(\vx_{k,t};\hat{\boldsymbol{\Theta}}_t,\vw_{k,t})$ via \eqref{eq:9}.
		\STATE Calculates losses $\gL(\hat{f}_{\text{RF},it}(\vx_{k,t};\vtheta_{i,t}),y_{k,t})$, $\forall i \in [N]$.
		\STATE Updates ${w}_{ik,t+1}$, $\forall i \in [N]$ via \eqref{eq:12}.
		\STATE Selects a subset of kernel indices $\gS_{k,t}$ using \Algref{alg:1}.
		\STATE Updates ${\vtheta}_{ik,t+1}$, $\forall i \in \gS_{k,t}$ via \eqref{eq:11} and sends ${\vtheta}_{ik,t+1}$, $\forall i \in \gS_{k,t}$ to the server.
		\ENDFOR
		\STATE The server updates $\vtheta_{i,t+1}$, $\forall i \in [N]$ via \eqref{eq:13}. 
		\ENDFOR
	\end{algorithmic}
\end{algorithm}
\Algref{alg:2} summarizes the proposed personalized online federated multi-kernel learning algorithm called POF-MKL. It is useful to note that using our proposed POF-MKL, the server cannot find the gradients $\nabla \gL(\boldsymbol{\theta}_{i,t}^\top \vz_i(\vx_{k,t}),y_{k,t})$ from updates received from clients. Instead, the server can find $\nabla \gL(\boldsymbol{\theta}_{i,t}^\top \vz_i(\vx_{k,t}),y_{k,t})/p_{ik,t}$ where $p_{ik,t}$ is a time-varying value determined locally by the $k$-th client. This can promote the privacy of the proposed POF-MKL since exchanging the gradients can be hazardous to the privacy of federated learning (see e.g. \cite{Zhu2019,Geiping2020}).

\textbf{Complexity.} Each client needs to store $d$-dimensional $D$ random feature vectors for each kernel. Therefore, the memory requirement of each client to implement function approximation using POF-MKL is $\gO(dND)$. Using POF-MKL, at each time instant, each client needs to perform $\gO(dND)$ operations including inner products and summations. Furthermore, when $\xi_k < 1$, in order to choose a subset of kernels, the $k$-th client needs to sort kernels which imposes worst case computational complexity of $\gO(N\log N)$. However, when $\xi_k=1$, according to PMF in \eqref{eq:10}, the $k$-th client chooses one bin uniformly at random and as a result in this case the $k$-th client does not need to sort kernels. Therefore, setting $\xi_k < 1$, the computational complexity for the $k$-th client is $\gO(dND + N\log N)$ while setting $\xi_k=1$, the computational complexity of the $k$-th client at each time instant is $\gO(dND)$.

\subsection{Regret Analysis} \label{sec:analysis}
The present section analyzes the regret of the proposed POF-MKL. Specifically, two types of regret $\gR_{k,T}$ and $\gR_{s,T}$ are considered for the $k$-th client and the server, respectively. 
The performance of the $k$-th client utilizing POF-MKL is analyzed in terms of regret defined as
\begin{align}
    \gR_{k,T} = \sum_{t=1}^{T}{\gL(\hat{f}(\vx_{k,t};\hat{\boldsymbol{\Theta}}_t,\vw_{k,t}),y_{k,t})} - \min_{i \in [N]}{\sum_{t=1}^{T}{\gL(\hat{f}_{\text{RF},it}(\vx_{k,t};\vtheta_{i,t}),y_{k,t})}} \label{eq:14}
\end{align}
where $\gR_{k,T}$ measures the cumulative difference between the loss of the $k$-th client and the loss of the RF approximation of the kernel with minimum loss among all kernels' RF approximations. Let $\alpha_{ik,t}^*$, $\forall t \in [T]$, $\forall k \in [K]$ represents the optimal coefficients associated with the $i$-th kernel such that $f_i^*(\vx) = \sum_{t=1}^{T}{\sum_{k=1}^{K}{\alpha_{ik,t}^* \kappa_i(\vx,\vx_{k,t})}}$. Then the best function approximator is defined as $f^*(\cdot) \in \argmin_{f_i^*, i\in [N]} \sum_{t=1}^{T}\sum_{k=1}^{K}{{\gL(f_i^*(\vx_{k,t}),y_{k,t})}}$. Furthermore, the regret of the server is defined as the cumulative difference between the loss of POF-MKL and that of the best function approximator over all data samples distributed among clients which can be expressed as
\begin{align}
    \gR_{s,T} = \sum_{t=1}^{T}{\sum_{k=1}^{K}{\gL(\hat{f}(\vx_{k,t};\hat{\boldsymbol{\Theta}}_t,\vw_{k,t}),y_{k,t})}} - \sum_{t=1}^{T}{\sum_{k=1}^{K}{\gL(f^*(\vx_{k,t}),y_{k,t})}}. \label{eq:18}
\end{align}
In order to analyze the regret of POF-MKL, suppose that the following assumptions hold true:

\textbf{(as1)} $\gL(\vtheta_{i,t}^\top \vz_i(\vx_{k,t}),y_{k,t})$, $\forall k \in [K]$ is convex with respect to $\vtheta_{i,t}$ at each time instant $t$.\\
\textbf{(as2)} For $\vtheta$ in a bounded set satisfying $\|\vtheta\| \le C$, the loss function and its gradient are bounded as $0 \le \gL(\vtheta^\top \vz_i(\vx_{k,t}),y_{k,t}) \le 1$ and $\|\nabla \gL(\vtheta^\top \vz_i(\vx_{k,t}),y_{k,t})\| \le L$. Moreover, each data sample is bounded as $\|\vx_{k,t}\| \le 1$, $\forall k \in [K]$, $\forall t \in [T]$.\\
\textbf{(as3)} Kernels $\kappa_i(\cdot)$, $\forall i \in [N]$ are shift-invariant with $\kappa_i(\mathbf{0})=1$, $\forall i \in [N]$.

The following theorem investigates the regret of the $k$-th client according to the $k$-th client data. The proof of the following Theorem can be found in Appendix \ref{B}.
\begin{theorem} \label{th:2}
Under (as1)--(as3), the regret of the $k$-th client with respect to the best kernel satisfies
\begin{align}
    \sum_{t=1}^{T}{\gL(\hat{f}(\vx_{k,t};\hat{\boldsymbol{\Theta}}_t,\vw_{k,t}),y_{k,t})} - \min_{i \in [N]}{\sum_{t=1}^{T}{\gL(\hat{f}_{\text{RF},it}(\vx_{k,t};\vtheta_{i,t}),y_{k,t})}} \le \frac{\ln N}{\eta_k} + \frac{\eta_k}{2}T. \label{eq:16}
\end{align}
\end{theorem}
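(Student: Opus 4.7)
The statement is the classical regret guarantee for the Hedge / exponential-weights aggregation applied kernel-wise at client $k$, so the plan is to follow the standard potential-function argument with $W_{k,t}=\sum_{i=1}^{N}w_{ik,t}$ as the potential.

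First I would use assumption (as1): since $\gL(\cdot,y_{k,t})$ is convex in its first argument and $\hat{f}(\vx_{k,t};\hat{\boldsymbol{\Theta}}_t,\vw_{k,t})=\sum_{i}(w_{ik,t}/W_{k,t})\hat{f}_{\text{RF},it}(\vx_{k,t};\vtheta_{i,t})$ is a convex combination, Jensen's inequality gives
\begin{align*}
\gL(\hat{f}(\vx_{k,t};\hat{\boldsymbol{\Theta}}_t,\vw_{k,t}),y_{k,t}) \le \sum_{i=1}^{N} \frac{w_{ik,t}}{W_{k,t}}\, \gL(\hat{f}_{\text{RF},it}(\vx_{k,t};\vtheta_{i,t}),y_{k,t}).
\end{align*}
So it suffices to bound the cumulative weighted average of the per-kernel losses, call them $\ell_{ik,t}:=\gL(\hat{f}_{\text{RF},it}(\vx_{k,t};\vtheta_{i,t}),y_{k,t})\in[0,1]$ by (as2), against $\min_{i}\sum_{t}\ell_{ik,t}$.

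Next I would analyze the log-ratio of potentials. Using \eqref{eq:12},
\begin{align*}
\frac{W_{k,t+1}}{W_{k,t}} = \sum_{i=1}^{N} \frac{w_{ik,t}}{W_{k,t}} \exp(-\eta_k \ell_{ik,t}).
\end{align*}
Because $\ell_{ik,t}\in[0,1]$ I would apply the elementary inequality $e^{-x}\le 1-x+x^2/2$ valid for $x\ge0$, bound $\ell_{ik,t}^2\le \ell_{ik,t}\le 1$ (or just $\le 1$ to match the $\eta_k/2$ constant the theorem advertises), and then take $\ln(1+u)\le u$ to obtain
\begin{align*}
\ln \frac{W_{k,t+1}}{W_{k,t}} \le -\eta_k \sum_{i=1}^{N}\frac{w_{ik,t}}{W_{k,t}}\ell_{ik,t} + \frac{\eta_k^2}{2}.
\end{align*}
Summing over $t=1,\dots,T$ and telescoping yields an upper bound on $\ln(W_{k,T+1}/W_{k,1})$ in terms of the weighted-average cumulative loss plus $\eta_k^2 T/2$.

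For the lower bound on the same quantity, I would use $W_{k,1}=N$ (from the initialization $w_{ik,1}=1$) and, for any fixed kernel index $i^\star$, the trivial bound $W_{k,T+1}\ge w_{i^\star k,T+1}=\exp(-\eta_k\sum_{t=1}^{T}\ell_{i^\star k,t})$. Combining the two bounds, dividing through by $\eta_k$, rearranging, and finally choosing $i^\star\in\argmin_{i\in[N]}\sum_{t=1}^{T}\ell_{ik,t}$ together with the Jensen step above produces exactly \eqref{eq:16}. No step here is conceptually hard; the only place that requires a moment of care is matching the constant $\tfrac12$ rather than the sharper Hoeffding constant $\tfrac18$, which is why I would deliberately use the crude inequality $e^{-x}\le 1-x+x^2/2$ with $x^2\le 1$ instead of Hoeffding's lemma on the bounded random variable.
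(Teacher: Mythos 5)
Your proposal is correct and follows essentially the same route as the paper's proof in Appendix \ref{B}: the exponential-weights potential argument with $W_{k,t}$, the inequality $e^{-x}\le 1-x+\tfrac{1}{2}x^2$, the bound $\gL^2\le 1$ from (as2), the telescoping sum, the lower bound $W_{k,T+1}\ge w_{i^\star k,T+1}$ with $W_{k,1}=N$, and the final Jensen step using convexity of the loss in the prediction. The only difference is cosmetic—you apply Jensen at the outset while the paper defers it to the last line—so there is nothing substantive to add.
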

Theorem \ref{th:2} shows that by setting $\eta_k = \gO\left(\frac{1}{\sqrt{T}}\right)$, the $k$-th client achieves sub-linear regret of $\gO(\sqrt{T})$. Furthermore, Theorem \ref{th:2} shows that POF-MKL can deal with heterogeneous data among clients since the regret of each client defined in \eqref{eq:14} is calculated with respect to the corresponding client data. 
The following theorem studies the regret of the server with respect to the best function approximator. The proof can be found in Appendix \ref{C}.
\begin{theorem} \label{th:3}
Let $i^* := \argmin_{i \in [N]}{\sum_{t=1}^{T}{\sum_{k=1}^{K}{\gL(f_i^*(\vx_{k,t}),y_{k,t})}}}$ and $\sigma_i$ be the second Fourier moment of the $i$-th kernel. Under (as1)--(as3), the regret of the server with respect to the best function approximator satisfies
\begin{align}
    & \sum_{t=1}^{T}{\sum_{k=1}^{K}{\gL(\hat{f}(\vx_{k,t};\hat{\boldsymbol{\Theta}}_t,\vw_{k,t}),y_{k,t})}} - \sum_{t=1}^{T}{\sum_{k=1}^{K}{\gL(f^*(\vx_{k,t}),y_{k,t})}} \nonumber \\ \le & \frac{KC^2}{2\eta} + \frac{\eta}{2} \sum_{t=1}^{T}{\sum_{k=1}^{K}{\frac{L^2}{p_{i^*k,t}}}} + \sum_{k=1}^{K}{\left(\frac{\ln N}{\eta_k} + \frac{\eta_k}{2}T\right)} + \epsilon LKTC \label{eq:19}
\end{align}
with probability at least $1- 2^8 \left(\frac{\sigma_{i^*}}{\epsilon}\right)^2\exp\left(-\frac{D\epsilon^2}{4(d+2)}\right)$ where $C:=\max_{i \in [N]} \sum_{t=1}^{T}{\sum_{k=1}^{K}{\alpha_{ik,t}^*}}$.
\end{theorem}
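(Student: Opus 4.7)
The plan is to split the server regret into four pieces that isolate (i) each client's regret against its own best RF single-kernel approximation, (ii) the gap between each client's local best kernel and the globally optimal kernel $i^{*}$, (iii) the federated online gradient descent regret on the shared parameters of kernel $i^{*}$, and (iv) the random-feature approximation error between the RF representative of $f_{i^{*}}^{*}$ and $f_{i^{*}}^{*}$ itself. Let $i^{*}\in\argmin_{i\in[N]}\sum_{t,k}\gL(f_{i}^{*}(\vx_{k,t}),y_{k,t})$, so $f^{*}=f_{i^{*}}^{*}$, and let $\vtheta_{i^{*}}^{*}:=\sum_{t,k}\alpha_{i^{*}k,t}^{*}\vz_{i^{*}}(\vx_{k,t})$ be the RF representative of $f_{i^{*}}^{*}$. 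The decomposition is
\begin{align*}
\gR_{s,T} &= \underbrace{\sum_{k}\Bigl[\sum_{t}\gL(\hat f,y_{k,t})-\min_{i}\sum_{t}\gL(\hat f_{\mathrm{RF},it},y_{k,t})\Bigr]}_{(\mathrm{A})} \\
&\quad + \underbrace{\sum_{k}\Bigl[\min_{i}\sum_{t}\gL(\hat f_{\mathrm{RF},it},y_{k,t})-\sum_{t}\gL(\hat f_{\mathrm{RF},i^{*}t},y_{k,t})\Bigr]}_{(\mathrm{B})} \\
&\quad + \underbrace{\sum_{t,k}\Bigl[\gL(\vtheta_{i^{*},t}^{\top}\vz_{i^{*}}(\vx_{k,t}),y_{k,t})-\gL(\vtheta_{i^{*}}^{*\top}\vz_{i^{*}}(\vx_{k,t}),y_{k,t})\Bigr]}_{(\mathrm{C})} \\
&\quad + \underbrace{\sum_{t,k}\Bigl[\gL(\vtheta_{i^{*}}^{*\top}\vz_{i^{*}}(\vx_{k,t}),y_{k,t})-\gL(f_{i^{*}}^{*}(\vx_{k,t}),y_{k,t})\Bigr]}_{(\mathrm{D})}.
\end{align*}

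Term $(\mathrm{A})$ is bounded by $\sum_{k}\bigl(\ln N/\eta_{k}+\eta_{k}T/2\bigr)$ by summing the $K$ instances of Theorem~\ref{th:2}, and $(\mathrm{B})\le 0$ because $i^{*}\in[N]$ is one of the indices in the inner minimum. For term $(\mathrm{C})$, define $F_{t}(\vtheta):=\sum_{k}\gL(\vtheta^{\top}\vz_{i^{*}}(\vx_{k,t}),y_{k,t})$, which is convex by (as1). From \eqref{eq:13} the server update reads $\vtheta_{i^{*},t+1}=\vtheta_{i^{*},t}-(\eta/K)\tilde g_{t}$, where $\tilde g_{t}:=\sum_{k}\nabla\gL(\vtheta_{i^{*},t}^{\top}\vz_{i^{*}}(\vx_{k,t}),y_{k,t})\1_{i^{*}\in\gS_{k,t}}/p_{i^{*}k,t}$ is an unbiased estimator of $\nabla F_{t}(\vtheta_{i^{*},t})$. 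Expanding $\|\vtheta_{i^{*},t+1}-\vtheta_{i^{*}}^{*}\|^{2}$, telescoping, invoking convexity of $F_{t}$, and conditioning on the past yields
\[
\sum_{t}\bigl[F_{t}(\vtheta_{i^{*},t})-F_{t}(\vtheta_{i^{*}}^{*})\bigr]\le\frac{K\|\vtheta_{i^{*}}^{*}\|^{2}}{2\eta}+\frac{\eta}{2K}\sum_{t}\|\tilde g_{t}\|^{2}.
\]
Cauchy--Schwarz gives $\|\tilde g_{t}\|^{2}\le K\sum_{k}\|\nabla\gL\|^{2}\1_{i^{*}\in\gS_{k,t}}/p_{i^{*}k,t}^{2}$, and $\E[\1_{i^{*}\in\gS_{k,t}}\mid\mathrm{past}]=p_{i^{*}k,t}$ together with $\|\nabla\gL\|\le L$ from (as2) produces $\sum_{t}\|\tilde g_{t}\|^{2}\le K\sum_{t,k}L^{2}/p_{i^{*}k,t}$. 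Under (as3) the RF feature satisfies $\|\vz_{i^{*}}(\vx)\|=1$, so the triangle inequality gives $\|\vtheta_{i^{*}}^{*}\|\le\sum_{t,k}|\alpha_{i^{*}k,t}^{*}|\le C$, and therefore $(\mathrm{C})\le KC^{2}/(2\eta)+(\eta/2)\sum_{t,k}L^{2}/p_{i^{*}k,t}$.

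For term $(\mathrm{D})$, invoke the uniform Rahimi--Recht approximation bound: with probability at least $1-2^{8}(\sigma_{i^{*}}/\epsilon)^{2}\exp(-D\epsilon^{2}/(4(d+2)))$, $\sup_{\vx,\vx'}|\hat\kappa_{i^{*},r}(\vx,\vx')-\kappa_{i^{*}}(\vx,\vx')|\le\epsilon$. On this event,
\[
\bigl|\vtheta_{i^{*}}^{*\top}\vz_{i^{*}}(\vx_{k,t})-f_{i^{*}}^{*}(\vx_{k,t})\bigr|=\Bigl|\sum_{t',k'}\alpha_{i^{*}k',t'}^{*}\bigl(\hat\kappa_{i^{*},r}(\vx_{k,t},\vx_{k',t'})-\kappa_{i^{*}}(\vx_{k,t},\vx_{k',t'})\bigr)\Bigr|\le\epsilon C,
\]
and since $\|\nabla\gL\|\le L$ combined with $\|\vz_{i^{*}}\|=1$ makes $\gL$ $L$-Lipschitz in its first argument, $(\mathrm{D})\le\epsilon LKTC$. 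Summing $(\mathrm{A})$--$(\mathrm{D})$ yields \eqref{eq:19}. The main obstacle is the randomness bookkeeping in $(\mathrm{C})$: both $w_{ik,t}$ and hence $p_{i^{*}k,t}$ depend on past subset draws and gradient updates, so the second-moment estimate of $\tilde g_{t}$ and the convexity step that converts $\tilde g_{t}\cdot(\vtheta_{i^{*},t}-\vtheta_{i^{*}}^{*})$ into $F_{t}(\vtheta_{i^{*},t})-F_{t}(\vtheta_{i^{*}}^{*})$ must be justified along the correct filtration; in keeping with the form of \eqref{eq:19}, the argument is organised so that $p_{i^{*}k,t}$ stays inside the final bound rather than being averaged out. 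A secondary point is verifying $\|\vtheta_{i^{*}}^{*}\|\le C$, which follows from the definition $C:=\max_{i}\sum_{t,k}\alpha_{ik,t}^{*}$ and the unit-norm property of $\vz_{i^{*}}$ under (as3).
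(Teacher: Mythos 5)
Your proposal is correct and follows essentially the same route as the paper: term $(\mathrm{A})$ is Theorem~\ref{th:2} summed over clients, term $(\mathrm{C})$ is exactly the paper's Lemma on the importance-weighted federated gradient descent for kernel $i^*$ (with the same second-moment computation and the same bound $\|\vtheta_{i^*}^*\|\le C$ via $\|\vz_{i^*}(\vx)\|= 1$), and term $(\mathrm{D})$ is the same Rahimi--Recht uniform approximation step combined with $L$-Lipschitzness; the paper merely proves the combined inequality for an arbitrary kernel index $i$ and then instantiates $i=i^*$, which is equivalent to your explicit $(\mathrm{B})\le 0$ step. The filtration issue you flag in $(\mathrm{C})$ (the conditional expectation over the subset draws versus the deterministic telescoping of $\|\vtheta_{i^*,t+1}-\vtheta_{i^*}^*\|^2$) is present in the paper's own argument as well, so your treatment matches theirs.
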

As it can be inferred from \eqref{eq:19}, the regret of the server with respect to the best function approximator depends on $\frac{1}{p_{i^*k,t}}$. From \eqref{eq:10} and the fact that $p_{ik,t} = q_{b_i k,t}$, it can be concluded that $p_{i^*k,t} > \frac{\xi_k}{m}$. Thus, setting $\xi_k = \gO(1)$, then $p_{ik,t}>\gO(\frac{M}{N})$. The regret bound in \eqref{eq:19} shows that setting $\eta = \gO\left(\sqrt{\frac{M}{NT}}\right)$ and $\epsilon = \eta_k = \frac{1}{\sqrt{T}}$, $\forall k \in [K]$, the server obtains regret of $\gO\left(\sqrt{\frac{N}{M}T}\right)$ with probability at least $1- 2^8 \left(\frac{\sigma_{i^*}}{\epsilon}\right)^2\exp\left(-\frac{D\epsilon^2}{4(d+2)}\right)$. This shows that increasing $M$ tighten the regret bound and increasing $D$ increases the probability that the regret bound in \eqref{eq:19} holds true. However, using POF-MKL, each client needs to transmit $MD$ parameters at each time instant. Since both $M$ and $D$ are determined by the algorithm POF-MKL, this shows that POF-MKL can provide flexibility to tighten regret bound while the available clients-to-server communication bandwidth can afford transmission of clients' updates to the server. It is useful to mention that choosing larger value for $\xi_k$ increases the lower bound of $p_{i^*k,t}$ and as a result the optimal choice for $\xi_k$ in terms of regret is $\xi_k=1$. However, choosing smaller values for $\xi_k$ makes the value of $p_{ik,t}$ more dependent on weights $\{w_{ik,t}\}_{k=1}^{K}$ (c.f. \eqref{eq:10}). Therefore, choosing smaller values for $\xi_k$ makes $p_{ik,t}$ less predictable. This makes estimating $\nabla \mathcal L(\boldsymbol \theta_{i,t}^\top z_i(x_{k,t}),y_{k,t})$ given $\nabla \mathcal L(\boldsymbol \theta_{i,t}^\top z_i(x_{k,t}),y_{k,t})/ p_{ik,t}$ more difficult which leads to better protection of privacy.

\textbf{Comparison with personalized federated learning.} In order to deal with heterogeneous data among clients, personalized federated learning has been studied extensively in the literature (see \cite{Smith2017,Corinzia2019,Deng2020,Fallah2020,Hanzely2020,Dinh2020,Li2021,Li2021c,Shamsian2021,Collins2021,Acar2021,Marfoq2021,Zhang2021,Achituve2021,Sun2021,Chen2022}). Utilizing model-agnostic meta-learning \cite{Finn2017}, personalized federated learning algorithms have been proposed in \cite{Fallah2020,Acar2021}. In \cite{Shamsian2021,Chen2022}, personalized federated learning algorithms have been designed by learning hyper-networks \cite{Ha2017}. In \cite{Marfoq2021}, a personalized model is a linear combination of a set of shared component models such that each client constructs its personalized mixture of models. However, in aforementioned personalized federated learning works, clients are assumed to store a dataset to perform local updates with. Therefore, when clients are not able to store data in batch and they have to make a decision upon receiving a new data sample, aforementioned works in personalized federated learning cannot guarantee sub-linear regret for clients. However, according to Theorems \ref{th:2} and \ref{th:3}, POF-MKL provides sub-linear regret for clients when clients cannot store data in batch and make decision in an online fashion.

\textbf{Comparison with online federated learning \cite{Mitra2021}.} Fed-OMD algorithm has been proposed in \cite{Mitra2021} which enables clients to perform their learning task in an online and federated fashion while it is proved that Fed-OMD enjoys sub-linear regret when the loss function is convex with respect to parameters required to be learnt at each time instant. The proposed POF-MKL differs from Fed-OMD in the sense that Fed-OMD cannot guarantee sub-linear regret when it comes to performing the online learning task with RF approximations of multiple kernels since the loss function $\gL(\sum_{i=1}^{N}{w_{i,t}\vtheta_{i,t}^\top \vz_i(\vx)},y)$ is not convex with respect to both $\vtheta_{i,t}$ and $w_{i,t}$. However, according to Theorems \ref{th:2} and \ref{th:3}, the proposed POF-MKL guarantees sub-linear regret. 
    
\textbf{Comparison with \cite{Hong2021}.} Online federated MKL algorithms called vM-KOFL and eM-KOFL have been presented in \cite{Hong2021}. Both POF-MKL and algorithms in \cite{Hong2021} exploit random feature approximation to alleviate computational complexity of online kernel learning. Furthermore, both POF-MKL and algorithms in \cite{Hong2021} learn a linear combination of kernels. The proposed POF-MKL has the following advantages and innovations compared to vM-KOFL and eM-KOFL: i) The proposed POF-MKL allows clients to learn their own personalized combination of kernels (c.f. \eqref{eq:9}). As it is proved in Theorem \ref{th:2}, the proposed POF-MKL can deal with heterogeneous data among clients in the sense that using POF-MKL each client guarantees sub-linear regret with respect to the best kernel RF approximation according to the corresponding client data. However, both vM-KOFL and eM-KOFL are not able to provide such guarantee. ii) Using vM-KOFL, each client needs to send $(N+1)D$ parameters to the server. However, using the proposed POF-MKL, each client needs to send $MD$ parameters to the server such that $M \le N$ is determined by POF-MKL and can be chosen to be much smaller than $N$. iii) In eM-KOFL, the server chooses a kernel at each time instant and clients send their local updates associated with the chosen kernel by the server. The proposed POF-MKL provides more flexibility compared to eM-KOFL in the sense that using POF-MKL each client can send local updates of $M \ge 1$ kernels to the server. And each client chooses its own subset of kernels to send their updates to the server. Therefore, even though POF-MKL sets $M$ to $1$, it is possible that at a time instant the server receives updates associated with all kernels in the dictionary. It is useful to mention that using eM-KOFL the cumulative regret of all clients is sub-linear with respect to the best kernel RF approximation with probability $1-\delta$ where $0 < \delta \le 1$. However, utilizing the update rule in \eqref{eq:11}, using the proposed POF-MKL, each client obtains sub-linear regret with respect to RF approximation of its best kernel with probability 1.  

\section{Experiments} \label{sec:exp}
We tested the performance of the proposed POF-MKL for online regression task through a set of experiments. The performance of POF-MKL is compared with the baselines PerFedAvg \cite{Fallah2020}, OFSKL \cite{Mitra2021}, OFMKL-Avg \cite{Mitra2021}, vM-KOFL \cite{Hong2021} and eM-KOFL \cite{Hong2021}. PerFedAvg refers to the personalized federated averaging algorithm in \cite{Fallah2020}. In the experiments, PerFedAvg employs a fully connected feedforward neural network model. More information about the implementation of PerFedAvg can be found in Appendix \ref{sup:exp}. OFSKL and OFMKL-Avg are two variations of Fed-OMD \cite{Mitra2021}. OFSKL leverages Fed-OMD \cite{Mitra2021} when a single radial basis function (RBF) with bandwidth of $10$ is employed to perform the learning task. In OFMKL-Avg, kernels are learned independently from each other using Fed-OMD \cite{Mitra2021} and the prediction is the average of approximations given by kernels. Moreover, vM-KOFL and eM-KOFL are online federated MKL algorithms of \cite{Hong2021} such that vM-KOFL requires transmission of all kernel updates at every time instant while eM-KOFL requires transmission of a kernel update at each time instant.
In the experiments, each client observes $500$ samples until the end of the learning task meaning that $T=500$. The performance of the proposed POF-MKL and other baselines are tested on the following real datasets downloaded from UCI machine learning repository \cite{Dua2017}: Naval \cite{Coraddu2016}, UJI \cite{Torres-Sospedra2014}, Air \cite{Zhang2017} and WEC \cite{Neshat2018}. More detailed information about datasets can be found in Appendix \ref{sup:exp}.
Data samples of Naval and UJI datasets are distributed i.i.d among clients. Data samples in Air and WEC datasets are distributed non-i.i.d among clients. More inforamtion about distributing data samples among clients can be found in Appendix \ref{sup:exp}. The number of clients for Naval, UJI, Air and WEC datasets are $23$, $42$, $240$ and $560$, respectively. The dictionary of kernels consists of $51$ RBFs with different bandwidth such that the bandwidth of the $i$-th kernel is $\sigma_i = 10^{\frac{2i-52}{25}}$. We consider the case where the clients-to-server communication bandwidth is limited such that at each time instant, the maximum number of parameters that a client is allowed to transmit to the server is $1000$. Furthermore, the memory and computational capability of clients are limited such that the maximum value can be picked for the number of random features $D$ is $100$. The experiments were carried for $20$ different sets of random feature vectors. The performance of algorithms is measured using average of mean squared error (MSE) defined as
\begin{align}
\text{MSE} = \frac{1}{20} \sum_{j=1}^{20}{ \frac{1}{KT}\sum_{t=1}^{T}{\sum_{k=1}^{K}{(\hat{y}_{jk,t}-y_{k,t})^2}}} \nonumber
\end{align}
where $\hat{y}_{jk,t}$ denotes the prediction of the $k$-th client at time instant $t$ corresponding to the $j$-th set of random feature vectors. Learning rates are set to $\eta = \eta_k = \frac{1}{\sqrt{T}}$, $\forall k$. Also, exploration rates are set to $\xi_k=1$, $\forall k$. The performance of POF-MKL with different $\xi_k$ is studied in Appendix \ref{sup:exp}. Codes are available at \url{https://github.com/pouyamghari/POF-MKL}.

Table \ref{table:1} presents the MSE and run time performance of online federated kernel learning algorithms on real datasets. Run time refers to average total run time of clients to perform online learning task on the entire data samples that they observe. In Table \ref{table:1}, $M$ refers to the number of kernels whose updates are sent to the server after prediction at each time instant. And, $D$ is the number of random features. Comparing MSE of POF-MKL with that of OFMKL-Avg, it can be concluded that learning the weights to combine kernels provides higher accuracy than averaging kernels' predictions. Table \ref{table:1} shows that POF-MKL with $M=1$ provides lower MSE than eM-KOFL. Using eM-KOFL, at each time instant, the server receives updates belong to only one kernel. However, using POF-MKL with $M=1$, each client sends an update belongs to a kernel which is selected by the client. Therefore, the server receives updates associated with different kernels even though $M=1$. Therefore, experimental results show the effectiveness of the personalized kernel selection provided by POF-MKL. It can be observed that POF-MKL with $M=25$ obtains lower MSE than those of POF-MKL with $M=51$ and vM-KOFL. Since each client is allowed to send at most $1000$ parameters per time instant, if clients send updates of all kernels at every time instant as this is the case in vM-KOFL, $D$ cannot be chosen to be greater than $9$. However, setting $M=25$, POF-MKL can set $D=20$ which can improve the accuracy of online regression task compared to the case where $D=9$. Note that according to Theorem \ref{th:3}, increase in $D$ increases the probability that the server achieves sub-linear regret with respect to the best function approximator. Furthermore, POF-MKL with $M=51$ achieves lower MSE than vM-KOFL even if data samples are distributed i.i.d among clients. This shows that the proposed POF-MKL can better cope with heterogeneous data among clients which is in agreement with theoretical results in Theorem \ref{th:2}. In fact, the optimal combination of kernels can be different across clients. Using POF-MKL, each client constructs its own personalized combination of kernels which results in lower MSE compared to vM-KOFL. The proposed POF-MKL with $M=1$ and $M=25$ runs faster than eM-KOFL. In fact, using POF-MKL, clients only need to update parameters associated with $M$ kernels while employing vM-KOFL and eM-KOFL, clients have to update parameters of all kernels. Moreover, POF-MKL obtains lower MSE than PerFedAvg. Note that since clients are not able to store data in batch, at each time instant clients update PerFedAvg's model using only the newly observed data sample. Therefore, convergence of PerFedAvg is not guaranteed. Experimental results show that POF-MKL achieves higher accuracy than PerFedAvg in online regression task when it is not possible for clients to store data in batch. 
Since OFSKL employs only a pre-selected single kernel, OFSKL runs faster than POF-MKL. However, utilizing multiple kernels enables POF-MKL to obtain lower MSE than that of OFSKL. In fact, using POF-MKL clients learn a linear combination of kernels which is proved to enjoy sub-linear regret with respect to the best kernel in hindsight while employing OFSKL clients have to make predictions using a pre-selected kernel. 
\begin{table}[t]
\caption{MSE($\times 10^{-3}$) and run time of online federated learning algorithms on real datasets.}
\label{table:1}
\begin{center}
\begin{tabular}{l|c|c||c|c|c|c||c|c|c|c}
\toprule
    &   &   &\multicolumn{4}{c}{MSE($\times 10^{-3}$)}  &\multicolumn{4}{c}{Run time(s)} \\
{\bf Algorithms}    &$M$    &$D$    &{Naval}    &UJI   &{Air}  &{WEC} &{Naval}    &UJI   &{Air}  &{WEC}
\\ \hline
PerFedAvg  &-    &-   &$118.60$    &$63.03$  &$13.68$    &$77.33$  &$44.59$ &$41.67$   &$37.40$   &$33.56$ \\
OFSKL  &$1$    &$100$   &$77.77$    &$61.82$  &$13.65$    &$87.87$  &$0.07$ &$0.06$   &$0.08$   &$0.06$ \\
OFMKL-Avg  &$51$    &$9$    &$33.25$    &$55.44$  &$10.63$    &$34.01$   &$1.51$    &$1.73$   &$0.91$   &$0.47$ \\
vM-KOFL &$51$   &$9$   &$26.42$ &$51.50$  &$10.58$    &$25.17$  &$2.01$ &$2.22$   &$1.37$   &$0.67$ \\
eM-KOFL &$1$    &$100$    &$28.64$  &$61.08$  &$21.94$    &$20.14$  &$2.27$ &$10.13$   &$1.45$   &$1.70$ \\ \hline
POF-MKL &$1$    &$100$   &$\mathbf{16.16}$  &$\mathbf{33.02}$  &$\mathbf{9.27}$   &$\mathbf{11.44}$ &$1.22$ &$9.02$   &$1.25$   &$1.10$\\
POF-MKL &$25$   &$20$   &$16.82$    &$37.34$   &$9.34$   &$11.58$   &$0.69$ &$2.29$   &$0.63$   &$0.52$\\
POF-MKL &$51$   &$9$    &$16.65$    &$41.00$   &$9.38$   &$11.97$   &$0.82$ &$1.07$   &$0.81$   &$0.65$\\
\bottomrule
\end{tabular}
\end{center}
\end{table}
Furthermore, Figure \ref{fig:regret} illustrates the average regret of clients when clients employ vM-KOFL and the proposed POF-MKL with different $M$ parameters. From Figure \ref{fig:regret}, it can be observed that the proposed POF-MKL achieves sub-linear regret.
\begin{figure}
\centering
\subfigure[Naval dataset.]{%
  \centering
  \includegraphics[scale=.22]{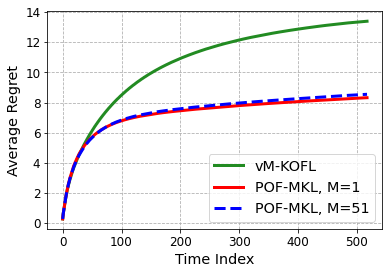}
    }
\quad
\subfigure[UJI dataset.]{%
  \centering
  \includegraphics[scale=.22]{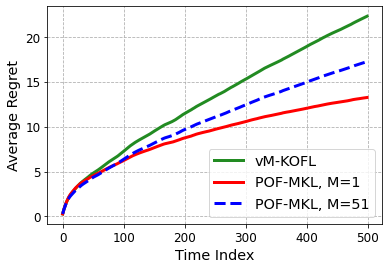}
    }
\quad
\subfigure[Air dataset.]{%
  \centering
  \includegraphics[scale=.22]{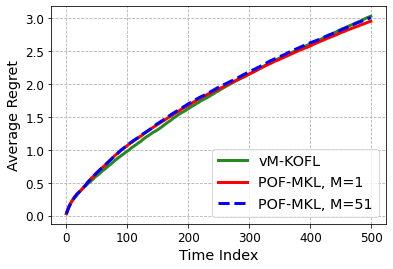}
    }
\quad
\subfigure[WEC dataset.]{%
  \centering
  \includegraphics[scale=.22]{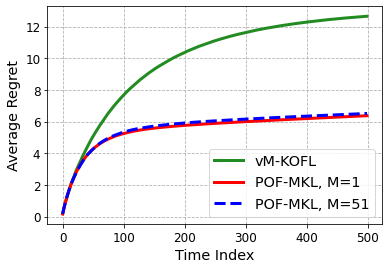}
	}
\caption{Average regret of clients.}
\label{fig:regret}
\end{figure}

\section{Conclusions}
The present paper proposed a personalized online federated MKL algorithm called POF-MKL based on RF approximation. Employing the proposed POF-MKL, each client updates the parameters of a subset of kernels which alleviates the computational complexity of the client as well as communication cost of sending updated parameters of kernels. Theoretical analysis proved that using POF-MKL, each client achieves sub-linear regret with respect to the RF approximation of its best kernel in hindsight which indicates that POF-MKL can deal heterogeneous data among clients. While each client updates a subset of kernels, it was proved that the server achieves sub-linear regret with respect to the best function approximator. Experiments on real datasets showcased the advantages of POF-MKL compared with other online federated kernel learning algorithms.

\section*{Acknowledgement}
This work is supported by NSF ECCS 2207457. PI Yanning Shen also receives support from Microsoft Academic Grant Award for AI Research. Contact: Yanning Shen (yannings@uci.edu).

\bibliographystyle{plainnat}
\bibliography{References}

\section*{Checklist}


\begin{enumerate}

\item For all authors...
\begin{enumerate}
  \item Do the main claims made in the abstract and introduction accurately reflect the paper's contributions and scope?
    \answerYes{}
  \item Did you describe the limitations of your work?
    \answerYes{In section \ref{sec:exp}, we mention that the proposed multi-kernl leanring algorithm POF-MKL runs slower than single kernel learning algorithm OFSKL.}
  \item Did you discuss any potential negative societal impacts of your work?
    \answerNA{We do not foresee any direct societal impact.}
  \item Have you read the ethics review guidelines and ensured that your paper conforms to them?
    \answerYes{}
\end{enumerate}

\item If you are including theoretical results...
\begin{enumerate}
  \item Did you state the full set of assumptions of all theoretical results?
    \answerYes{Please see section \ref{sec:analysis} where we explicitly explain the assumptions.}
        \item Did you include complete proofs of all theoretical results?
    \answerYes{Complete proofs of theoretical results are presented in Appendices \ref{B} and \ref{C}.}
\end{enumerate}

\item If you ran experiments...
\begin{enumerate}
  \item Did you include the code, data, and instructions needed to reproduce the main experimental results (either in the supplemental material or as a URL)?
    \answerYes{We provide code, data, and instructions to reproduce the main experimental results in the supplementary material.}
  \item Did you specify all the training details (e.g., data splits, hyperparameters, how they were chosen)?
    \answerYes{In section \ref{sec:exp} and Appendix \ref{sup:exp}, detailed information about experimental setup is provided.}
        \item Did you report error bars (e.g., with respect to the random seed after running experiments multiple times)?
    \answerYes{We did the experiments for $20$ sets of random features. In Appendix \ref{sup:exp}, the standard deviation of MSE is reported.}
        \item Did you include the total amount of compute and the type of resources used (e.g., type of GPUs, internal cluster, or cloud provider)?
    \answerYes{In Table \ref{table:1}, we report run time of algorithms. Also, in Appendix \ref{sup:exp}, we explain that all experiments were carried out using Intel(R) Core(TM) i7-10510U CPU @ 1.80 GHz 2.30 GHz processor with a 64-bit {Windows} operating system.}
\end{enumerate}

\item If you are using existing assets (e.g., code, data, models) or curating/releasing new assets...
\begin{enumerate}
  \item If your work uses existing assets, did you cite the creators?
    \answerYes{In section \ref{sec:exp}, we cited all datasets used for experiments.}
  \item Did you mention the license of the assets?
    \answerNA{}
  \item Did you include any new assets either in the supplemental material or as a URL?
    \answerNo{}
  \item Did you discuss whether and how consent was obtained from people whose data you're using/curating?
    \answerYes{In section \ref{sec:exp}, we point out that datasets are downloaded from UCI Machine Learning Repository.}
  \item Did you discuss whether the data you are using/curating contains personally identifiable information or offensive content?
    \answerNA{}
\end{enumerate}

\item If you used crowdsourcing or conducted research with human subjects...
\begin{enumerate}
  \item Did you include the full text of instructions given to participants and screenshots, if applicable?
    \answerNA{}
  \item Did you describe any potential participant risks, with links to Institutional Review Board (IRB) approvals, if applicable?
    \answerNA{}
  \item Did you include the estimated hourly wage paid to participants and the total amount spent on participant compensation?
    \answerNA{}
\end{enumerate}

\end{enumerate}


\newpage

\appendix

\section{Proof of Theorem \ref{th:2}} \label{B}
Since $W_{k,t} = \sum_{i=1}^{N}{w_{ik,t}}$ and according to \eqref{eq:12}, we can write
\begin{align}
    \frac{W_{k,t+1}}{W_{k,t}} = \sum_{i=1}^{N}{\frac{w_{k,t+1}}{W_{k,t}}} = \sum_{i=1}^{N}{\frac{w_{ik,t}}{W_{k,t}}\exp\left(-\eta_k \gL(\hat{f}_{\text{RF},it}(\vx_{k,t};\vtheta_{i,t}),y_{k,t})\right)}. \label{eq:12ap}
\end{align}
Using the inequality $e^{-x} \le 1-x+\frac{1}{2}x^2, \forall x \ge 0$, the upper bound of \eqref{eq:12ap} can be obtained as
\begin{align}
    \frac{W_{k,t+1}}{W_{k,t}} \le \sum_{i=1}^{N}{\frac{w_{ik,t}}{W_{k,t}}\left(1- \eta_k \gL(\hat{f}_{\text{RF},it}(\vx_{k,t};\vtheta_{i,t}),y_{k,t}) + \frac{\eta_k^2}{2} \gL^2(\hat{f}_{\text{RF},it}(\vx_{k,t};\vtheta_{i,t}),y_{k,t})\right)}. \label{eq:13ap}
\end{align}
Using the inequality $1+x \le e^x$ and taking the logarithm from both sides of \eqref{eq:13ap}, we get
\begin{align}
    \ln \frac{W_{k,t+1}}{W_{k,t}} \le \sum_{i=1}^{N}{\frac{w_{ik,t}}{W_{k,t}}\left( - \eta_k \gL(\hat{f}_{\text{RF},it}(\vx_{k,t};\vtheta_{i,t}),y_{k,t}) + \frac{\eta_k^2}{2} \gL^2(\hat{f}_{\text{RF},it}(\vx_{k,t};\vtheta_{i,t}),y_{k,t})\right)}. \label{eq:14ap}
\end{align}
According to (as2), $\gL^2(\hat{f}_{\text{RF},it}(\vx_{k,t};\vtheta_{i,t}),y_{k,t}) \le 1$. Therefore, from \eqref{eq:14ap}, we can conclude that
\begin{align}
    \ln \frac{W_{k,t+1}}{W_{k,t}} \le \sum_{i=1}^{N}{\frac{w_{ik,t}}{W_{k,t}}\left( \frac{\eta_k^2}{2} - \eta_k \gL(\hat{f}_{\text{RF},it}(\vx_{k,t};\vtheta_{i,t}),y_{k,t}) \right)}. \label{eq:15ap}
\end{align}
Summing \eqref{eq:15ap} over time, we arrive at
\begin{align}
    \ln \frac{W_{k,T+1}}{W_{k,1}} \le \sum_{t=1}^{T}{\sum_{i=1}^{N}{\frac{w_{ik,t}}{W_{k,t}}\left( \frac{\eta_k^2}{2} - \eta_k \gL(\hat{f}_{\text{RF},it}(\vx_{k,t};\vtheta_{i,t}),y_{k,t}) \right)}}. \label{eq:16ap}
\end{align}
Moreover, for any $i\in [N]$, $\ln \frac{W_{k,T+1}}{W_{k,1}}$ can be lower bounded as
\begin{align}
    \ln \frac{W_{k,T+1}}{W_{k,1}} \ge \ln \frac{w_{ik,T+1}}{W_{k,1}} = - \eta_k \sum_{t=1}^{T}{\gL(\hat{f}_{\text{RF},it}(\vx_{k,t};\vtheta_{i,t}),y_{k,t})} - \ln N. \label{eq:17ap}
\end{align}
Combining \eqref{eq:16ap} with \eqref{eq:17ap}, we obtain
\begin{align}
    \sum_{t=1}^{T}{\sum_{i=1}^{N}{\frac{w_{ik,t}}{W_{k,t}}\gL(\hat{f}_{\text{RF},it}(\vx_{k,t};\vtheta_{i,t}),y_{k,t})}} - \sum_{t=1}^{T}{\gL(\hat{f}_{\text{RF},it}(\vx_{k,t};\vtheta_{i,t}),y_{k,t})} \le \frac{\ln N}{\eta_k} + \frac{\eta_k}{2}T. \label{eq:18ap}
\end{align}
Since the loss function $\gL(\cdot,\cdot)$ is convex, using \eqref{eq:18ap} and Jensen inequality we can write
\begin{align}
    \sum_{t=1}^{T}{\gL\left(\sum_{i=1}^{N}{\frac{w_{ik,t}}{W_{k,t}}\hat{f}_{\text{RF},it}(\vx_{k,t};\vtheta_{i,t})},y_{k,t}\right)} - \sum_{t=1}^{T}{\gL(\hat{f}_{\text{RF},it}(\vx_{k,t};\vtheta_{i,t}),y_{k,t})} \le \frac{\ln N}{\eta_k} + \frac{\eta_k}{2}T \label{eq:19ap}
\end{align}
which proves the Theorem \ref{th:2}.

\section{Proof of Theorem \ref{th:3}}\label{C}
In order to prove Theorem \ref{th:3}, the following Lemma is used as a stepstone.
\begin{lemma} \label{th:1}
Let $\alpha_{ik,t}^*$, $\forall t \in [T]$, $\forall k \in [K]$ represents the optimal coefficients associated with the $i$-th kernel such that $f_i^*(\vx) = \sum_{t=1}^{T}{\sum_{k=1}^{K}{\alpha_{ik,t}^* \kappa_i(\vx,\vx_{k,t})}}$. And $\hat{f}_i^*(\vx) = (\vtheta_i^*)^\top \vz_i(\vx)$ denotes the best RF-based estimator associated with the $i$-th kernel such that $\vtheta_i^* = \sum_{t=1}^{T}{\sum_{k=1}^{K}{\alpha_{ik,t}^* \vz_i(\vx_{k,t})}}$. Under assumptions (as1)--(as3), using POF-MKL, the RF approximation of the $i$-th kernel satisfies
\begin{align}
    & \sum_{t=1}^{T}{\sum_{k=1}^{K}{\gL(\hat{f}_{\text{RF},it}(\vx_{k,t};\vtheta_{i,t}),y_{k,t})}} - \sum_{t=1}^{T}{\sum_{k=1}^{K}{\gL(\hat{f}_i^*(\vx_{k,t}),y_{k,t})}} \nonumber \\ \le& \frac{K \|\vtheta_i^*\|^2}{2\eta} + \frac{\eta}{2} \sum_{t=1}^{T}{\sum_{k=1}^{K}{\frac{L^2}{p_{ik,t}}}}. \label{eq:15}
\end{align}
\end{lemma}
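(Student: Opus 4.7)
The plan is to treat the server update in eq.~\eqref{eq:13} as stochastic online gradient descent on the time-varying objective $F_t(\vtheta):=\sum_k \gL(\vtheta^\top \vz_i(\vx_{k,t}),y_{k,t})$, with the unbiased gradient estimator $\hat{G}_t:=\sum_k g_{ik,t}$, where $g_{ik,t}:=\frac{\1_{i\in\gS_{k,t}}}{p_{ik,t}}\nabla \gL(\vtheta_{i,t}^\top\vz_i(\vx_{k,t}),y_{k,t})$. First I would write the standard OGD identity
$$\|\vtheta_{i,t+1}-\vtheta_i^*\|^2 = \|\vtheta_{i,t}-\vtheta_i^*\|^2 - \tfrac{2\eta}{K}\hat{G}_t^\top(\vtheta_{i,t}-\vtheta_i^*) + \tfrac{\eta^2}{K^2}\|\hat{G}_t\|^2,$$
rearrange, and telescope over $t=1,\dots,T$, using $\vtheta_{i,1}=\mathbf{0}$ and dropping the nonnegative terminal term, to obtain
$$\sum_{t=1}^{T}\hat{G}_t^\top(\vtheta_{i,t}-\vtheta_i^*) \le \frac{K\|\vtheta_i^*\|^2}{2\eta} + \frac{\eta}{2K}\sum_{t=1}^{T}\|\hat{G}_t\|^2.$$

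Next I would apply (as1) kernel-wise: convexity of $\gL$ with respect to $\vtheta$ gives, pathwise for every $(k,t)$,
$$\tfrac{\1_{i\in\gS_{k,t}}}{p_{ik,t}}\bigl[\gL(\vtheta_{i,t}^\top\vz_i(\vx_{k,t}),y_{k,t})-\gL((\vtheta_i^*)^\top\vz_i(\vx_{k,t}),y_{k,t})\bigr] \le g_{ik,t}^\top(\vtheta_{i,t}-\vtheta_i^*).$$
Summing over $k$ and $t$ and taking iterated expectations over the kernel-subset sampling, where $\mathbb{E}[\1_{i\in\gS_{k,t}}/p_{ik,t}\mid \mathcal{F}_t]=1$ (with $\mathcal{F}_t$ the history under which $\vtheta_{i,t}$ and $p_{ik,t}$ are measurable), collapses the weighted LHS to the plain cumulative regret $\sum_{t,k}[\gL(\vtheta_{i,t}^\top\vz_i(\vx_{k,t}),y_{k,t})-\gL((\vtheta_i^*)^\top\vz_i(\vx_{k,t}),y_{k,t})]$, which is the left-hand side of \eqref{eq:15}.

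The last step is to bound the second moment. By Cauchy--Schwarz, $\|\hat{G}_t\|^2 \le K\sum_k \|g_{ik,t}\|^2$, and using (as2) together with the identity $\mathbb{E}[\1_{i\in\gS_{k,t}}/p_{ik,t}^2\mid \mathcal{F}_t]=1/p_{ik,t}$ gives
$$\mathbb{E}\bigl[\|\hat{G}_t\|^2\mid \mathcal{F}_t\bigr] \le K\sum_{k=1}^{K}\frac{L^2}{p_{ik,t}}.$$
Multiplying by $\eta/(2K)$ and summing over $t$ yields the second term $\tfrac{\eta}{2}\sum_{t,k}L^2/p_{ik,t}$, and combining with the OGD inequality gives \eqref{eq:15}.

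The main technical subtlety will be the bookkeeping around pathwise versus expected inequalities: the statement \eqref{eq:15} lives on a single realization of the sampled subsets $\{\gS_{k,t}\}$, so one has to be careful to argue either that (i) the bound holds in expectation (which is how the above convexity/Cauchy--Schwarz steps naturally combine, and which is what is actually needed for Theorem~\ref{th:3}), or (ii) retain the $\1_{i\in\gS_{k,t}}/p_{ik,t}$ weighting on the regret and appeal to the fact that these importance weights are unbiased. The crucial observation that makes the $L^2/p_{ik,t}$ (rather than $L^2/p_{ik,t}^2$) factor appear is that although the estimator is scaled by $1/p_{ik,t}$, the indicator $\1_{i\in\gS_{k,t}}$ contributes a mass-$p_{ik,t}$ probability, reducing one power of $1/p_{ik,t}$ in the second-moment bound.
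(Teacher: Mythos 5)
Your proposal follows essentially the same route as the paper's proof: the same online-gradient-descent expansion of $\|\vtheta_{i,t+1}-\vtheta_i^*\|^2$, the same importance-weighted convexity step collapsed via $\E[\1_{i\in\gS_{k,t}}/p_{ik,t}]=1$ and $\E[\1_{i\in\gS_{k,t}}/p_{ik,t}^2]=1/p_{ik,t}$, and the same Cauchy--Schwarz (AM--GM) bound $\|\sum_k \nabla\ell_{ik,t}\|^2\le K\sum_k\|\nabla\ell_{ik,t}\|^2$ followed by telescoping from $\vtheta_{i,1}=\mathbf{0}$. The pathwise-versus-expectation bookkeeping you flag is indeed present in (and glossed over by) the paper's own argument as well, so your proposal matches it in both substance and level of rigor.
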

\begin{proof}
Let $\ell_{ik,t}$ be the importance sampling loss estimate defined as
\begin{align}
    \ell_{ik,t} = \frac{\gL(\boldsymbol{\theta}_{i,t}^\top \vz_i(\vx_{k,t}),y_{k,t})}{p_{ik,t}} \1_{i \in \gS_{k,t}}. \label{eq:1ap}
\end{align}
Then according to \eqref{eq:11}, for any fixed $\vtheta$, it can be written that
\begin{align}
    \left \|\frac{1}{K}\sum_{k=1}^{K}{\vtheta_{ik,t+1}} - \vtheta \right \|^2 &= \left \|\frac{1}{K}\sum_{k=1}^{K}{(\vtheta_{i,t}-\eta \nabla\ell_{ik,t})} - \vtheta \right \|^2 = \left \|\vtheta_{i,t} - \vtheta - \frac{\eta}{K}\sum_{k=1}^{K}{\nabla\ell_{ik,t}} \right \|^2 \nonumber \\ &= \|\vtheta_{i,t} - \vtheta\|^2 - \frac{2\eta}{K}\left(\sum_{k=1}^{K}{\nabla^\top \ell_{ik,t}}\right)(\vtheta_{i,t} - \vtheta) + \left\| \frac{\eta}{K}\sum_{k=1}^{K}{\nabla \ell_{ik,t}} \right\|^2 \label{eq:2ap}
\end{align}
According to the convexity of the loss function $\gL(\vtheta^\top \vx, y)$ with respect to $\vtheta$ as stated in (as1), we find
\begin{align}
    \gL(\vtheta_{i,t}^\top \vz_i(\vx_{k,t}),y_{k,t}) - \gL(\vtheta^\top \vz_i(\vx_{k,t}),y_{k,t}) \le \nabla^\top \gL(\vtheta_{i,t}^\top \vz_i(\vx_{k,t}),y_{k,t})(\vtheta_{i,t}-\vtheta). \label{eq:3ap}
\end{align}
Multiplying both sides of \eqref{eq:3ap} by $\frac{\1_{i \in \gS_{k,t}}}{p_{ik,t}}$, we get
\begin{align}
    & \left(\frac{\gL(\vtheta_{i,t}^\top \vz_i(\vx_{k,t}),y_{k,t})}{p_{ik,t}} - \frac{\gL(\vtheta^\top \vz_i(\vx_{k,t}),y_{k,t})}{p_{ik,t}}\right)\1_{i \in \gS_{k,t}} \nonumber \\ \le & \frac{\nabla^\top \gL(\vtheta_{i,t}^\top \vz_i(\vx_{k,t}),y_{k,t})}{p_{ik,t}}(\vtheta_{i,t}-\vtheta)\1_{i \in \gS_{k,t}}. \label{eq:4ap}
\end{align}
Summing \eqref{eq:4ap} over $k$, $\forall k \in [K]$, we arrive at
\begin{align}
    & \sum_{k=1}^{K}{\left(\frac{\gL(\vtheta_{i,t}^\top \vz_i(\vx_{k,t}),y_{k,t})}{p_{ik,t}} - \frac{\gL(\vtheta^\top \vz_i(\vx_{k,t}),y_{k,t})}{p_{ik,t}}\right)\1_{i \in \gS_{k,t}}} \nonumber \\ \le & \left(\sum_{k=1}^{K}{\frac{\nabla^\top \gL(\vtheta_{i,t}^\top \vz_i(\vx_{k,t}),y_{k,t})}{p_{ik,t}}\1_{i \in \gS_{k,t}}}\right) (\vtheta_{i,t}-\vtheta). \label{eq:5ap}
\end{align}
Based on the definition of $\ell_{ik,t}$, \eqref{eq:5ap} can be rewritten as
\begin{align}
    \sum_{k=1}^{K}{\ell_{ik,t}} - \sum_{k=1}^{K}{ \frac{\gL(\vtheta^\top \vz_i(\vx_{k,t}),y_{k,t})}{p_{ik,t}}\1_{i \in \gS_{k,t}}} \le \left(\sum_{k=1}^{K}{\nabla^\top \ell_{ik,t}}\right)(\vtheta_{i,t} - \vtheta). \label{eq:6ap}
\end{align}
According to \eqref{eq:2ap}, \eqref{eq:6ap} is equivalent to
\begin{align}
    & \sum_{k=1}^{K}{\ell_{ik,t}} - \sum_{k=1}^{K}{ \frac{\gL(\vtheta^\top \vz_i(\vx_{k,t}),y_{k,t})}{p_{ik,t}}\1_{i \in \gS_{k,t}}} \nonumber \\ \le & \frac{K}{2\eta}\left(\|\vtheta_{i,t} - \vtheta\|^2 - \left \|\frac{1}{K}\sum_{k=1}^{K}{\vtheta_{ik,t+1}} - \vtheta \right \|^2 + \left\| \frac{\eta}{K}\sum_{k=1}^{K}{\nabla \ell_{ik,t}} \right\|^2 \right) \label{eq:7ap}
\end{align}
Expectations of $\ell_{ik,t}$ and $\|\nabla \ell_{ik,t}\|^2$ with respect to $\1_{i \in \gS_{k,t}}$ can be calculated as
\begin{subequations} \label{eq:8ap}
\begin{align}
    \E_t[\ell_{ik,t}] &= \frac{\gL(\vtheta_{i,t}^\top \vz_i(\vx_{k,t}),y_{k,t})}{p_{ik,t}} p_{ik,t} = \gL(\vtheta_{i,t}^\top \vz_i(\vx_{k,t}),y_{k,t}) \label{eq:8apa} \\
    \E_t[\|\nabla \ell_{ik,t}\|^2] &= \frac{\|\nabla \gL(\vtheta_{i,t}^\top \vz_i(\vx_{k,t}),y_{k,t})\|^2}{p_{ik,t}^2} p_{ik,t} = \frac{\|\nabla \gL(\vtheta_{i,t}^\top \vz_i(\vx_{k,t}),y_{k,t})\|^2}{p_{ik,t}}. \label{eq:8apb}
\end{align}
\end{subequations}
Furthermore, using AM-GM inequality and \eqref{eq:8apb}, it can be concluded that
\begin{align}
    \E_t\left[\|\sum_{k=1}^{K}{\nabla \ell_{ik,t}}\|^2\right] \le \E_t\left[K\sum_{k=1}^{K}{\|\nabla \ell_{ik,t}\|^2}\right] \le K \sum_{k=1}^{K}{\frac{\|\nabla \gL(\vtheta_{i,t}^\top \vz_i(\vx_{k,t}),y_{k,t})\|^2}{p_{ik,t}} }. \label{eq:9ap}
\end{align}
According to \eqref{eq:13}, considering the fact that
\begin{align}
    \|\vtheta_{i,t+1} - \vtheta\|^2 = \left \|\frac{1}{K}\sum_{k=1}^{K}{\vtheta_{ik,t+1}} - \vtheta \right \|^2 \nonumber
\end{align}
taking the expectation from \eqref{eq:7ap} with respect to $\1_{i\in \gS_{k,t}}$, $\forall k \in [K]$ leads to
\begin{align}
    & \sum_{k=1}^{K}{\gL(\vtheta_{i,t}^\top \vz_i(\vx_{k,t}),y_{k,t})} - \sum_{k=1}^{K}{\gL(\vtheta^\top \vz_i(\vx_{k,t}),y_{k,t})} \nonumber \\ \le & \frac{K}{2\eta}\left(\|\vtheta_{i,t} - \vtheta\|^2 - \|\vtheta_{i,t+1} - \vtheta\|^2\right) + \frac{\eta}{2} \sum_{k=1}^{K}{\frac{\|\nabla \gL(\vtheta_{i,t}^\top \vz_i(\vx_{k,t}),y_{k,t})\|^2}{p_{ik,t}} }. \label{eq:10ap}
\end{align}
According to (as2), we can conclude that $\|\nabla \gL(\vtheta_{i,t}^\top \vz_i(\vx_{k,t}),y_{k,t})\|^2 \le L^2$. Hence, summing \eqref{eq:10ap} over time, given the fact that $\vtheta_{i,1}=\mathbf{0}$, $\forall i \in [N]$, we get
\begin{align}
    & \sum_{t=1}^{T}{\sum_{k=1}^{K}{\gL(\vtheta_{i,t}^\top \vz_i(\vx_{k,t}),y_{k,t})}} - \sum_{t=1}^{T}{\sum_{k=1}^{K}{\gL(\vtheta^\top \vz_i(\vx_{k,t}),y_{k,t})}} \nonumber \\ \le & \frac{K}{2\eta}\left(\|\vtheta\|^2 - \|\vtheta_{i,T+1} - \vtheta\|^2\right) + \frac{\eta}{2} \sum_{t=1}^{T}{\sum_{k=1}^{K}{\frac{L^2}{p_{ik,t}}}}. \label{eq:11ap}
\end{align}
Replacing $\vtheta$ with $\vtheta_i^*$ and considering the fact that $\|\vtheta_{i,T+1} - \vtheta\|^2 \ge 0$, we obtain
\begin{align}
    \sum_{t=1}^{T}{\sum_{k=1}^{K}{\gL(\vtheta_{i,t}^\top \vz_i(\vx_{k,t}),y_{k,t})}} - \sum_{t=1}^{T}{\sum_{k=1}^{K}{\gL((\vtheta_i^*)^\top \vz_i(\vx_{k,t}),y_{k,t})}} \le \frac{K \|\vtheta_i^*\|^2}{2\eta} + \frac{\eta}{2} \sum_{t=1}^{T}{\sum_{k=1}^{K}{\frac{L^2}{p_{ik,t}}}} \nonumber
\end{align}
which proves the Lemma \ref{th:1}.
\end{proof}

In order to proof Theorem \ref{th:3}, we leverage the results obtained in the proofs of Lemma \ref{th:1} and Theorem \ref{th:2}. Since \eqref{eq:19ap} holds true for any $i$, summing \eqref{eq:19ap} over all $k\in [K]$, for any $i$ we can write
\begin{align}
    & \sum_{t=1}^{T}{\sum_{k=1}^{K}{\gL(\hat{f}(\vx_{k,t};\hat{\boldsymbol{\Theta}}_t,\vw_{k,t}),y_{k,t})}} - {\sum_{t=1}^{T}{\sum_{k=1}^{K}{\gL(\hat{f}_{\text{RF},it}(\vx_{k,t};\vtheta_{i,t}),y_{k,t})}}} \nonumber \\ \le & \sum_{k=1}^{K}{\left(\frac{\ln N}{\eta_k} + \frac{\eta_k}{2}T\right)}. \label{eq:20ap}
\end{align}
Combining \eqref{eq:20ap} with \eqref{eq:15}, we arrive at
\begin{align}
    & \sum_{t=1}^{T}{\sum_{k=1}^{K}{\gL(\hat{f}(\vx_{k,t};\hat{\boldsymbol{\Theta}}_t,\vw_{k,t}),y_{k,t})}} - \sum_{t=1}^{T}{\sum_{k=1}^{K}{\gL(\hat{f}_i^*(\vx_{k,t}),y_{k,t})}} \nonumber \\ \le & \frac{K \|\vtheta_i^*\|^2}{2\eta} + \frac{\eta}{2} \sum_{t=1}^{T}{\sum_{k=1}^{K}{\frac{L^2}{p_{ik,t}}}} + \sum_{k=1}^{K}{\left(\frac{\ln N}{\eta_k} + \frac{\eta_k}{2}T\right)}. \label{eq:21ap}
\end{align}
According to claim 1 in \cite{Rahimi2007}, it can be written that $\sup_{\vx,\vx^\prime} |\vz_i^\top(\vx)\vz_i(\vx^\prime) - \kappa_i(\vx,\vx^\prime)| \le \epsilon$ holds true with probability greater than
$
    1- 2^8 \left(\frac{\sigma_i}{\epsilon}\right)^2\exp\left(-\frac{D\epsilon^2}{4(d+2)}\right)
$
where $\sigma_i$ is the second Fourier moment of the $i$-th kernel $\kappa_i(\cdot)$. Furthermore, according to (as3), the loss function is $L$-Lipschitz continuous and as a result it can be written that
\begin{align}
    & \sum_{t=1}^{T}{\sum_{k=1}^{K}{|\gL(\hat{f}_i^*(\vx_{k,t}),y_{k,t}) - \gL(f_i^*(\vx_{k,t}),y_{k,t})|}} \nonumber \\ \le & \sum_{t=1}^{T}{\sum_{k=1}^{K}{L\left|\sum_{\tau=1}^{T}{\sum_{j=1}^{K}{\alpha_{ij,\tau}^* \vz_i^\top(\vx_{j,\tau})\vz_i(\vx_{k,t})}} - \sum_{\tau=1}^{T}{\sum_{j=1}^{K}{\alpha_{ij,\tau}^* \kappa_i(\vx_{j,\tau},\vx_{k,t})}}\right|}}. \label{eq:22ap}
\end{align}
Applying Cauchy-Schwartz inequality to the right hand side of \eqref{eq:22ap}, the left hand side of \eqref{eq:22ap} can be bounded from above as
\begin{align}
    & \sum_{t=1}^{T}{\sum_{k=1}^{K}{|\gL(\hat{f}_i^*(\vx_{k,t}),y_{k,t}) - \gL(f_i^*(\vx_{k,t}),y_{k,t})|}} \nonumber \\ \le & \sum_{t=1}^{T}{\sum_{k=1}^{K}{L\sum_{\tau=1}^{T}{\sum_{j=1}^{K}{|\alpha_{ij,\tau}^*||\vz_i^\top(\vx_{j,\tau})\vz_i(\vx_{k,t}) - \kappa_i(\vx_{j,\tau},\vx_{k,t})|}}}}. \label{eq:23ap}
\end{align}
Let $C:=\max_{i \in [N]} \sum_{t=1}^{T}{\sum_{k=1}^{K}{\alpha_{ik,t}^*}}$. Therefore, we can conclude that
\begin{align}
    \sum_{t=1}^{T}{\sum_{k=1}^{K}{|\gL(\hat{f}_i^*(\vx_{k,t}),y_{k,t}) - \gL(f_i^*(\vx_{k,t}),y_{k,t})|}} \le \epsilon LKTC \label{eq:24ap}
\end{align}
with probability at least $1- 2^8 \left(\frac{\sigma_i}{\epsilon}\right)^2\exp\left(-\frac{D\epsilon^2}{4(d+2)}\right)$. Moreover, using Triangle inequality, we can write
\begin{align}
    & \sum_{t=1}^{T}{\sum_{k=1}^{K}{\gL(\hat{f}_i^*(\vx_{k,t}),y_{k,t})}} - \sum_{t=1}^{T}{\sum_{k=1}^{K}{\gL(f_i^*(\vx_{k,t}),y_{k,t})}} \nonumber \\ \le & \left|\sum_{t=1}^{T}{\sum_{k=1}^{K}{\gL(\hat{f}_i^*(\vx_{k,t}),y_{k,t}) - \gL(f_i^*(\vx_{k,t}),y_{k,t})}}\right| \nonumber \\ \le & \sum_{t=1}^{T}{\sum_{k=1}^{K}{|\gL(\hat{f}_i^*(\vx_{k,t}),y_{k,t}) - \gL(f_i^*(\vx_{k,t}),y_{k,t})|}} \le \epsilon LKTC \label{eq:25ap}
\end{align}
which holds true with probability at least $1- 2^8 \left(\frac{\sigma_i}{\epsilon}\right)^2\exp\left(-\frac{D\epsilon^2}{4(d+2)}\right)$. Moreover, for $\vz_i^\top(\vx)\vz_i(\vx^\prime)$ we can write
\begin{align}
    \vz_i^\top(\vx)\vz_i(\vx^\prime) = \frac{1}{D}\sum_{j=1}^{D}{(\sin(\boldsymbol{\rho}_{i,j}^\top \vx)\sin(\boldsymbol{\rho}_{i,j}^\top \vx^\prime)+\cos(\boldsymbol{\rho}_{i,j}^\top \vx)\cos(\boldsymbol{\rho}_{i,j}^\top \vx^\prime))}. \label{eq:27ap}
\end{align}
Based on arithmetic-mean geometric-mean, \eqref{eq:27ap} can be relaxed to
\begin{align}
    \vz_i^\top(\vx)\vz_i(\vx^\prime) \le \frac{1}{D}\sum_{j=1}^{D}{\frac{1}{2}(\sin^2(\boldsymbol{\rho}_{i,j}^\top \vx)+\sin^2(\boldsymbol{\rho}_{i,j}^\top \vx^\prime)+\cos^2(\boldsymbol{\rho}_{i,j}^\top \vx)+\cos^2(\boldsymbol{\rho}_{i,j}^\top \vx^\prime))} = 1. \label{eq:28ap}
\end{align}
Thus, given the fact that $|\vz_i^\top(\vx)\vz_i(\vx^\prime)| \le 1$, $\|\vtheta_i^*\|^2$ can be bounded as
\begin{align}
    \|\vtheta_i^*\|^2 \le \sum_{t=1}^{T}{\sum_{k=1}^{K}{\sum_{\tau=1}^{T}{\sum_{j=1}^{K}{|\alpha_{ik,t}^*\alpha_{ij,\tau}^*\vz_i^\top(\vx_{j,\tau})\vz_i(\vx_{k,t})|}}}} \le C^2. \label{eq:26ap}
\end{align}
Combining \eqref{eq:25ap} and \eqref{eq:26ap} with \eqref{eq:21ap} yields
\begin{align}
    & \sum_{t=1}^{T}{\sum_{k=1}^{K}{\gL(\hat{f}(\vx_{k,t};\hat{\boldsymbol{\Theta}}_t,\vw_{k,t}),y_{k,t})}} - \sum_{t=1}^{T}{\sum_{k=1}^{K}{\gL(f_i^*(\vx_{k,t}),y_{k,t})}} \nonumber \\ \le & \frac{KC^2}{2\eta} + \frac{\eta}{2} \sum_{t=1}^{T}{\sum_{k=1}^{K}{\frac{L^2}{p_{ik,t}}}} + \sum_{k=1}^{K}{\left(\frac{\ln N}{\eta_k} + \frac{\eta_k}{2}T\right)} + \epsilon LKTC
\end{align}
which holds true for any $i \in [N]$ with probability at least $1- 2^8 \left(\frac{\sigma_i}{\epsilon}\right)^2\exp\left(-\frac{D\epsilon^2}{4(d+2)}\right)$. Therefore, this proves the Theorem \ref{th:3}.

\section{Supplementary Experimental Results and Details} \label{sup:exp}
This section presents further experimental results testing different aspects of the proposed algorithm POF-MKL. Moreover, this section provides more detailed information about experimental setup associated with results in section \ref{sec:exp}. The performance of federated kernel learning algorithms are tested on the following datasets:
\begin{itemize}
    \item \textbf{Naval:} The dataset consists of $11,500$ samples. Each sample has $15$ features of a a naval vessel. The goal is to predict lever position \cite{Coraddu2016}.
    \item \textbf{UJI:} The dataset consists of $21,000$ data samples. Each data sample has $520$ features which are WiFi fingerprints. The goal is to predict the geographical longitude associated with each data sample.
    \item \textbf{Air:} The dataset consists of $120,000$ samples with $14$ features including information related to air quality such as concentration of some chemicals in the air. Data samples are collected from $4$ different geographical sites. The goal is to predict the concentration of CO in the air \cite{Zhang2017}. For each site there are $30,000$ samples in the dataset.
    \item \textbf{WEC:} The dataset consists of $280,000$ samples with $48$ features of wave energy converters. Data samples are collected from $4$ different geographical sites. The goal is to predict total power output \cite{Neshat2018}. For each site, there are $70,000$ samples.
\end{itemize}
Data samples of Naval and UJI datasets are distributed i.i.d among clients. The number of clients for Naval and UJI datasets are $23$ and $42$, respectively. Data samples in Air and WEC datasets are distributed non-i.i.d among clients. The number of clients for Air and WEC datasets are $240$ and $560$, respectively. For both Air and WEC datasets, there are $4$ different geographical sites that each sample belongs to one of them. Each client observes $350$ samples from one site and $50$ samples from each of the rest of $3$ sites.
Moreover, PerFedAvg uses a feedforward neural network model. Each layer is a fully-connected dense layer with at most $20$ neurons. Neurons in hidden layers exploit ReLU activation functions. Since each client cannot transmit more than $1000$ parameters to the server, the number of hidden layers is determined in a way that the number of the neural network's parameters to be less than $1000$. The number of parameters depends on the number of features in data samples. Therefore, the number of hidden layers varies across different datasets. For each dataset, given the number of features, the maximum number of hidden layers with $20$ neurons is chosen.
All experiments were carried out using Intel(R) Core(TM) i7-10510U CPU @ 1.80 GHz 2.30 GHz processor with a 64-bit {Windows} operating system.

Table \ref{table:3} presents average MSE along with MSE standard deviation calculated over $20$ different sets of random feature vectors. As it can be seen from Table \ref{table:3}, the proposed POF-MKL provides lower standard deviation compared to all other baselines. This shows that the proposed POF-MKL is less sensitive to the choice of random features.
\begin{table}[t]
\caption{MSE($\times 10^{-3}$) and standard deviation($\times 10^{-3}$) of online federated learning algorithms on real datasets.}
\label{table:3}
\begin{center}
\begin{tabular}{l|c|c||c|c|c|c}
\toprule
{\bf Algorithms}    &$M$    &$D$  &{Naval}  &{UJI}   &{Air}  &{WEC}
\\ \hline
OFSKL  &$1$    &$100$    &$77.77\pm1.04$    &$61.82\pm2.76$  &$13.65\pm0.61$    &$87.87\pm3.93$ \\
OFMKL-Avg  &$51$    &$9$    &$33.25\pm1.46$ &$55.44\pm 2.48$  &$10.63\pm0.47$    &$34.01\pm1.52$ \\
vM-KOFL &$51$   &$9$    &$26.42\pm1.16$ &$51.50\pm 2.30$  &$10.58\pm0.47$    &$25.17\pm1.12$ \\
eM-KOFL &$1$    &$100$     &$28.64\pm1.32$  &$61.08\pm 2.73$  &$21.94\pm1.16$    &$20.14\pm0.93$ \\ \hline
POF-MKL &$1$    &$100$  &$\mathbf{16.16}\pm\mathbf{0.72}$   &$\mathbf{33.02}\pm \mathbf{1.48}$  &$\mathbf{9.27}\pm\mathbf{0.41}$   &$\mathbf{11.44}\pm\mathbf{0.52}$ \\
POF-MKL &$25$   &$20$     &$16.82\pm0.74$   &$37.34\pm 1.67$   &$9.34\pm 0.42$   &$11.58\pm0.53$\\
POF-MKL &$51$   &$9$   &$16.65\pm0.74$  &$41.00\pm 1.83$   &$9.38\pm 0.42$   &$11.97\pm0.55$\\
\bottomrule
\end{tabular}
\end{center}
\end{table}
Furthermore, Table  \ref{table:4} reports the average cumulative regret of clients along with the standard deviation of regret among clients. As it can be inferred from Table \ref{table:4}, the proposed POF-MKL obtains lower regret than other online federated MKL algorithms. Moreover, for Air and WEC datasets, the standard deviation of regret among clients associated with POF-MKL is considerably lower than those of other online federated MKL algorithms. Note that data samples in Air and WEC datasets are distributed non-i.i.d among clients. Therefore, the results in Table \ref{table:4} confirm that the proposed POF-MKL can better deal with heterogeneous data among clients.
\begin{table}[t]
\caption{Average regret and its standard deviation across clients for online federated MKL learning algorithms.}
\label{table:4}
\begin{center}
\begin{tabular}{l|c|c||c|c|c|c}
\toprule
{\bf Algorithms}    &$M$    &$D$  &{Naval}  &{UJI}   &{Air}  &{WEC}
\\ \hline
OFMKL-Avg  &$51$    &$9$    &$16.95\pm0.39$ &$24.23 \pm 20.33$  &$3.05\pm2.83$    &$17.07\pm14.52$ \\
vM-KOFL &$51$   &$9$    &$13.40\pm0.39$ &$22.28\pm15.56$  &$3.02\pm2.79$    &$12.65\pm11.53$ \\
eM-KOFL &$1$    &$100$     &$14.92\pm0.65$  &$27.26\pm19.79$  &$8.53\pm2.51$    &$10.43\pm8.28$ \\ \hline
POF-MKL &$1$    &$100$  &$\mathbf{8.33}\pm0.39$ &$\mathbf{13.23}\pm\mathbf{8.80}$  &$\mathbf{2.95}\pm\mathbf{2.08}$   &$\mathbf{6.37}\pm\mathbf{5.28}$ \\
POF-MKL &$25$   &$20$     &$8.67\pm0.39$    &$15.41\pm9.58$   &$2.98\pm 2.12$   &$6.41\pm5.39$\\
POF-MKL &$51$   &$9$   &$8.55\pm0.39$   &$17.23\pm10.07$   &$3.01\pm 2.15$   &$6.51\pm5.57$\\
\bottomrule
\end{tabular}
\end{center}
\end{table}

Figure \ref{fig:xi} illustrates the average regret of clients employing POF-MKL with the change in the value of exploration rate $\xi_k$ when the exploration rate of all clients are the same. In particular, Figure \ref{fig:xi} depicts the performance of POF-MKL for $M=1$ and $M=25$ with the change in $\xi_k$. According to the PMF $\vq_{k,t}$ defined in \eqref{eq:10}, the increase in $\xi_k$ leads to increase in exploration such that if $\xi_k=1$, the $k$-th client chooses a subset of kernels uniformly at random. Figure \ref{fig:xi} indicates that the optimal choice of $\xi_k$ in terms of regret depends on the dataset distributed among clients as well as the number of chosen kernels $M$. Moreover, the choice of $\xi_k$ is related to the computational complexity of executing POF-MKL by clients. Specifically, when $\xi_k < 1$, in order to choose a subset of kernels, the $k$-th client needs to sort kernels which imposes worst case computational complexity of $\gO(N\log N)$. However, when $\xi_k=1$, according to PMF in \eqref{eq:10}, the $k$-th client chooses one bin uniformly at random and as a result in this case the $k$-th client does not need to sort kernels. Also, it is useful to note that as it can be inferred from \eqref{eq:11}, clients can leverage the exploration rate $\xi_k$ to send their updates to the server without revealing both the gradient of loss and the loss of kernels which can promote the privacy of the proposed POF-MKL. 

\begin{figure}
\centering
\subfigure[Naval dataset.]{%
  \centering
  \includegraphics[scale=.3]{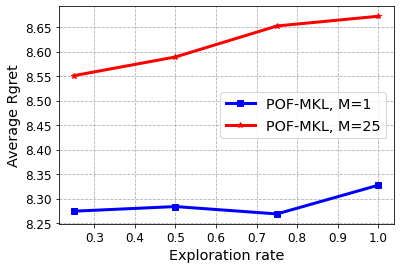}
    }
\quad
\subfigure[Air dataset.]{%
  \centering
  \includegraphics[scale=.3]{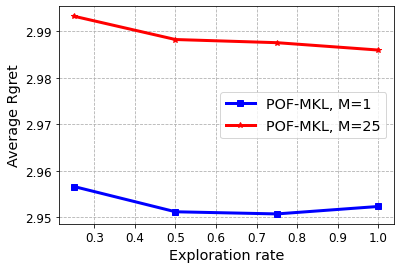}
    }
\quad
\subfigure[WEC dataset.]{%
  \centering
  \includegraphics[scale=.3]{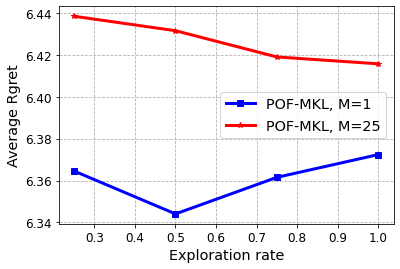}
	}
\caption{Average cumulative regret of clients with the change in the value of exploration rate ($\xi_k$).}
\label{fig:xi}
\end{figure}

\end{document}